\theoremstyle{definition}
\newcommand{\mypar}[1]{{\bf #1.}}
\newtheorem{myLem}{Lemma}
\newtheorem{myThm}{Theorem}
\newcommand\bigDiamond{\mathop{\mathpalette\bigDi@mond\relax}}
\newcommand\bigDi@mond[2]{%
  \vcenter{\hbox{\m@th
    \scalebox{\ifx#1\displaystyle 2\else1.2\fi}{$#1\Diamond$}%
  }}%
}
\newcommand\bigLozenge{\mathop{\mathpalette\bigL@zenge\relax}}
\newcommand\bigL@zenge[2]{%
  \vcenter{\hbox{\m@th
    \scalebox{\ifx#1\displaystyle 2\else1.2\fi}{$#1\blacklozenge$}%
  }}%
}
\newcommand{\R}{\ensuremath{\mathbb{R}}}
\def\x{\mathbf{x}}
\def\y{\mathbf{y}}
\begin{document}
%
\title{Online Multi-Agent Forecasting with
Interpretable Collaborative Graph Neural Networks}
%
%
%

\author{Maosen Li,~\IEEEmembership{Student Member,~IEEE,} ~
        Siheng Chen,~\IEEEmembership{Member,~IEEE,} ~
        Yanning Shen,~\IEEEmembership{Member,~IEEE,} ~
        Genjia Liu, ~ 
        Ivor Tsang,~\IEEEmembership{Senior Member,~IEEE,} ~
        and~Ya Zhang,~\IEEEmembership{Member,~IEEE}
\thanks{M. Li and S. Chen are with Cooperative Medianet Innovation Center, Shanghai Jiao Tong University, Shanghai, 200240, China.  E-mail: \{maosen\_li, sihengc\}@sjtu.edu.cn.}
\thanks{Y. Shen is with University of California Irvine, California, The United States. E-mail: yannings@uci.edu. }
\thanks{G. Liu is with Shanghai Jiao Tong University, Shanghai, 200240, China. E-mail: LGJ1zed@sjtu.edu.cn}
\thanks{I. Tsang is with Australian Artificial Intelligence Institute, University of Technology Sydney, Sydney, NSW, Australia. E-mail: ivor.tsang@uts.edu.au}
\thanks{Y. Zhang is with Cooperative Medianet Innovation Center, Shanghai Jiao Tong University, Shanghai, 200240, China.  E-mail: ya\_zhang@sjtu.edu.cn.}
}

\maketitle

\begin{abstract}
This paper considers predicting future statuses of multiple agents in an online fashion by exploiting dynamic interactions in the system. We propose a novel collaborative prediction unit (CoPU), which aggregates the predictions from multiple collaborative predictors according to a collaborative graph. Each collaborative predictor is trained to predict the status of an agent by considering the impact of another agent. The edge weights of the collaborative graph reflect the importance of each predictor. The collaborative graph is adjusted online by multiplicative update, which can be motivated by minimizing an explicit objective. With this objective, we also conduct regret analysis to indicate that, along with training, our CoPU achieves similar performance with the best individual collaborative predictor in hindsight.  This theoretical interpretability distinguishes our method from many other graph networks. To progressively refine predictions, multiple CoPUs are stacked to form a collaborative graph neural network. Extensive experiments are conducted on three tasks: online simulated trajectory prediction, online human motion prediction and online traffic speed prediction, and our methods outperform state-of-the-art works on the three tasks by $28.6\%$, $17.4\%$ and $21.0\%$ on average, respectively. 
\end{abstract}

\begin{IEEEkeywords}
Online multi-agent forecasting, collaborative graph, collaborative predictor, theoretical regret analysis
\end{IEEEkeywords}

%
\IEEEpeerreviewmaketitle

\section{Introduction}
%
%
%
%
\IEEEPARstart{D}{ynamic} multi-agent systems depict a group of co-evolving agents which perform interactions to activate or constrain the pattern of each other. Dynamic systems are ubiquitous and critical in many real-world scenarios, such as transportation networks and smart grids~\cite{Jahangiri_2015_TITS_AML,Wojtusiak_2012_CMA_MLA,Yu_2015_ACMSIGAPP_TSM}. In many applications, we need to predict future measurements of a multi-agent system, such as the traffic on various roads within hours for route planning~\cite{Song_2020_AAAI_STS}. 

To forecast the future status of systems, previous works usually train models with finite training datasets and evaluate models with unseen test sets in offline settings. The offline models essentially assume the training and test samples are stationary and share the same distribution, thus fixed models could generalize to arbitrary systemic states~\cite{Kipf_2018_ICML_NRI,Graber_2020_CVPR,NEURIPS2020_e4d8163c}. We sketch the offline learning paradigms in Fig.~\ref{fig:Jewel} (a).
\begin{figure}[tb]
    \centering
    \includegraphics[width=0.49\textwidth]{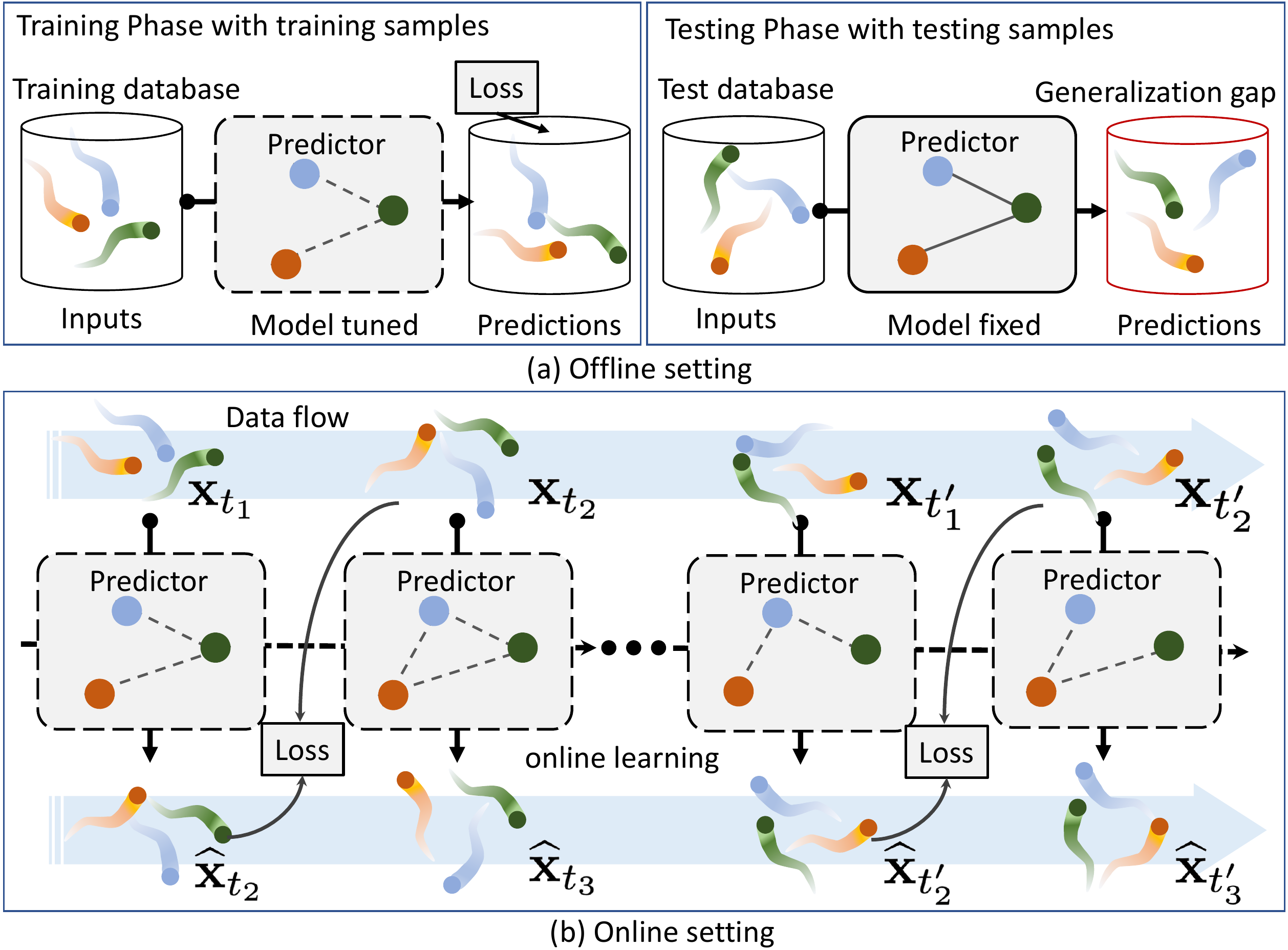}
    \caption{\small Comparison of the graph-based predictors in offline and online settings. (a) For the offline multi-agent forecasting, a training database is used to tune the predictor, followed by an isolated test phase with a test database. In the test phase, the model and graph construction mechanism are fixed; however, the highly dynamic data and possibly outlier test distribution might cause a generalization gap. We use a dashed box with a graph to represent trainable models and a solid box with a graph for fixed model and graph construction. (b) For online learning, the predictor is tested and tuned simultaneously along with the unlimited iterations. The model could dynamically adapt to time-varying data for real-time prediction. Note that predictions are supervised by data flow at delayed iterations.}
    \label{fig:Jewel}
\end{figure}
However, limited training sets are still hard to depict complicated real-world scenarios~\cite{Hoi_2018_arxiv_OLA}, and the multi-agent system could be highly dynamic, resulting in the infinitely possible states changing over time. To make reliable predictions for the dynamic multi-agent system with streaming states, we need an online model to adapt to a potential varying environment in real-time. Taking streaming data captured along time as inputs, the online multi-agent forecasting model is tested and trained simultaneously step-by-step and never stops. The online learning methods never fix the model parameters, while they always tune the models to  capture the dynamic even infinite systemic states along with iteration. The online learning paradigm is sketched in Fig.~\ref{fig:Jewel} (b).

Moreover, to learn the pattern of multiple agents, previous methods usually regard every single agent in an isolated manner, such as autoregressive models~\cite{Zivot2006}, hidden Markov models~\cite{Lehrmann_2014_CVPR_ENM}, Kalman filter~\cite{Vishwakarma_1970_IJSS_POE}, and deep sequence-to-sequence models~\cite{Hochreiter_1997_NeurComput_LSTM}, which exhibit increasingly powerful modeling; however, for multi-agent forecasting, these methods might underestimate potential interactions among the co-evolving agents in a dynamic system. For example, traffic flows on neighboring streets are mutually dependent; and human body-parts might constrain each other during moving. To leverage such relations, some works build graphs to model the relations for representation learning~\cite{Guo_2019_AAAI_ABS}. For example, in traffic analysis,~\cite{Yu_2018_IJCAI_STG} builds a road network according to the urban traffic map. To capture more flexible relations, some works build trainable graphs as model parameters, whose edge weights are updated in an end-to-end setting~\cite{Mao_2019_ICCV_LTD,Cui_2020_CVPR_LDR}. However, such graphs are typically constructed through trials and errors, lacking direct guidance and theoretical interpretation, leading to the difficulties of rigorous understanding and reasonable improvement. In this work, we propose interpretable update steps to learn a graph that captures the collaboration among agents. We further provide the theoretical justification of this graph.

This work aims to design an online forecasting model, which captures the dynamic interactions among agents for real-time prediction. The core component of our method is a \emph{collaborative prediction unit} (CoPU), which automatically builds a trainable \emph{collaborative graph} to represent the systematic interactions. This collaborative graph includes two types of nodes: agents and collaborative pairs. A collaborative pair, essentially formed by a pair of agents, refers to the mutual collaboration between the two agents. Fed with collaborative pairs, a collaborative predictor is built in CoPU to forecast each agent's status, aiming to leverage the inter-agent impacts, thus the collaborative predictors produce the possible measurements for each agent according to the effects from another agent. Finally, we learn a weight on each edge of the collaborative graph, aggregating all collaborative predictors for the final predictions. The trainable edge weights reflect the influence level of each collaborative pair. Notably, in CoPU, the collaborative predictor is parameterized and trained through online gradient descent, while, in a different way, the edge weights are adjusted by an online multiplicative update step-by-step to reflect the dynamic topology. The update strategy can be motivated by solving an explicit objective function. To understand CoPU, we conduct regret analysis and prove that the performance of CoPU can converge to that of the best single collaborative predictor.

One advantage of the proposed method is its interpretability, which comes from three aspects. 1) \emph{Algorithm}: the edge weights of our collaborative graph are obtained by solving an explicit optimization,  while previous works are based on heuristic designs~\cite{Kipf_2018_ICML_NRI, Li_2020_CVPR_DMG, Li_2018_ICLR_Diffusion, Li_2019_CVPR_ASG, Hu_2020_CVPR_CMP, Guo_2019_AAAI_ABS}. 2) \emph{Theory}: we prove that the gap between the predictions based on our collaborative graph and the optimal collaborative pair is within a bounded range. In other words, our collaborative graph describes the information and importance carried by different collaborative pairs for prediction. In comparison, previous works have no theoretical support at all. 3) \emph{Visualization}: our interpretability is reflected based on human understanding. In experiments, we visualize the learned collaborative graphs changing over time, reflecting comprehensible, reasonable and dynamic graphs in the online setting.

To produce more precise predictions, we further stack a series of CoPUs to build a \emph{collaborative graph neural network} (CoGNN), progressively refining the predictions. In CoGNN, each CoPU with a collaborative graph is directly supervised by the ground-truth targets, thus all CoPUs predict agents' statuses in the original measurement space. Skip-connections across CoPUs are leveraged to stablize prediction.
Compared to many spatio-temporal graph neural networks~\cite{Li_2020_CVPR_DMG, Shi_2019_CVPR_TSA, Yu_2018_IJCAI_STG}, the advantages of CoGNN include its interpretability and simplicity, because the information aggregation is performed according to the importance of collaborative pairs in the clear output space, while previous methods generate graph features in the hidden space.

We conduct extensive experiments to validate the proposed CoGNNs on three important tasks: online simulated trajectory prediction, online human motion prediction and online traffic speed prediction. We note that we formulate a new online setup that simultaneously trains and tests models, and we adapt previous offline state-of-the-art methods to the online setting. According to the results, CoGNNs significantly outperform state-of-the-art methods. 

The main contributions are as follows.
\begin{itemize}
    \item We propose a novel collaborative prediction unit (CoPU), which predicts the future statuses of multiple correlated agents based on a collaborative graph. The collaborative graph contains two types of nodes; that is, individual agents and collaborative pairs, which represent the agent dynamics and collaboration status between agents. In this way, an agent could share customized information with each neighboring agent via the collaborative pair.
    \item We employ an online multiplicatively update algorithm to learn the edge weights of the collaborative graph on the fly to solve an explicit objective function; see Eq.~\eqref{eq:w_update}. In this way, the graph tuned and tested simultaneously in an online setting effectively adapts to the changing states of the dynamic data flows. Moreover, the collaborative graph can be theoretically justified through regret analysis.
    \item We propose a novel collaborative graph neural network (CoGNN), which consists of a sequence of CoPUs and progressively refines multi-agent forecasting.
    \item We conduct extensive experiments to validate the effectiveness, interpretation and convergence of our method, which obtains superior and reasonable results.
\end{itemize}


The rest of this paper is organized as follows. In section~\ref{sec:RelatedWorks}, we introduce some works related to multi-agent forecasting, graph deep learning and online learning for prediction. In section~\ref{sec:Formulation}, we define and formulate the problem. In section~\ref{sec:CoPU}, we construct the proposed CoPU, introduce the online training method, and conduct the theoretical regret analysis. In section~\ref{sec:CoGNN}, we construct the entire CoGNN model. Finally, the experiments validating the advantages of our model and the conclusion of the paper are provided in Section~\ref{sec:experiment} and Section~\ref{sec:conclusion}.

\section{Related Works}
\label{sec:RelatedWorks}
In this section, we introduce critical methods related to our works from aspects of 1) multi-agent time-series forecasting, 2) graph deep learning and 3) online learning for prediction.

\mypar{Multi-agent forecasting of time-series}
Multi-agent forecasting has attracted huge attention and brought wide applications.  In the early era, state models were studied~\cite{Williams_2003_JTE_MFV, Lehrmann_2014_CVPR_ENM, Wang_2006_NIPS_GPD, Taylor_2019_ICML_FCR}.  Recently, data-driven models extract deep representations.  DeepState~\cite{Rangapuram_2018_NIPS_DSS}, Res-sup.~\cite{Martinez_2017_CVPR_OHM}, AGED~\cite{Gui_2018_ECCV} and LSTNet~\cite{Lai_2018_SIGIR_MLS} design recurrent networks for stability.  TCN~\cite{Bai_2018_arxiv_AEE} proposes feed-forward convolution for temporal dependencies.  However, the methods do not consider the informative interactions among agents to benefit forecasting.

Recently, some works construct graphs to explicitly exploit the agent relations. NRI~\cite{Kipf_2018_ICML_NRI} infers graphs via an autoencoder for prediction. S-RNN~\cite{Jain_2016_CVPR_SRD} builds factor graphs to propagate agents' information. DCRNN~\cite{Li_2018_ICLR_Diffusion} and ST-GCN~\cite{Yu_2018_IJCAI_STG} incorporate distance-based graphs for traffic forecasting. GraphWaveNet~\cite{Wu_2019_IJCAI_GWD} and Traj-GCN~\cite{Mao_2019_ICCV_LTD} use adaptive topologies to capture potential dependencies. DMGNN~\cite{Li_2020_CVPR_DMG} builds multiscale graphs. However, most graphs are built with limited human priors or designed through trials and errors, lacking theoretical guarantees and interpretation. In this work, we develop a novel framework for multi-agent forecasting, which infers a collaborative graph with theoretical justification.

\mypar{Graph deep learning}
The expressiveness of graphs contributes to various scenarios such as social networks~\cite{Li_2020_NIPS_GGN}, bioinformatics networks~\cite{Dobson_2003_JMB_DES} and human behaviors~\cite{Li_2019_CVPR_ASG,Hu_2020_CVPR_CMP}.  Graph neural networks (GNNs)~\cite{Kipf_2017_ICLR_SSC, Velickovic_2018_ICLR_GAN,9069948,9064693}, which expand deep leading to the non-Euclidean domain, have attracted explosive interests. GNNs can be mainly categorized into the spectral-domain-based~\cite{Bruna_2014_ICLR_SNL,Defferrard_2019_NIPS_CNN,Kipf_2017_ICLR_SSC} and vertex-domain-based~\cite{Hamilton_2017_NIPS_IRL,Niepert_2016_ICML_LCN,Velickovic_2018_ICLR_GAN,Li_2016_ICLR_GGS,Dai_2016_ICML_DEL,Xu_2019_ICLR_HPA,9079208}, which respectively learn the patterns from the graph Fourier representations or the raw topologies.  In this work, we learn interpretable graphs for multi-agent forecasting, which captures dynamic and complex correlations for precise prediction.

\mypar{Online learning for prediction}
A dynamic system evolves with potentially infinite states over time, performing on online streaming data~\cite{Thrun_1998_L2L_LLA}.  Tailored for the data flows, online prediction models have been designed to update in real-time~\cite{Lu_2018_TOIS_SPA}. Traditional kernel-based works develop budgeted kernel learning~\cite{Kivinen_2004_TSP_OLK, Dekel_2008_SIAM_TFA, Wang_2012_JMLR_BCK,9108601}, RF approximations~\cite{Lu_2016_JMLR_LSOK, Bouboulis_2018_TSP_ODL, Ding_2017_ICDM_LSK} and multi-kernel learning methods~\cite{Shen_2019_JMLR_RFO}. Recently, some deep-learning-based methods are proposed for online prediction~\cite{Xiu_2018_BMVC_PFE, Butepage_2018_ICRA_AMF, Marchetti_2020_CVPR_MMA}, whose network structures are specifically designed.  In this work, we train graph-based multi-agent forecasting models in an online setting, and optimize graphs to depict the agents' interactions in real-time.

\section{Problem Formulation}
\label{sec:Formulation}
With the time-series measurements collected by multiple agents in \emph{real-time}, the task of
online multi-agent forecasting essentially aims to predict the future statuses of each agent at each online time stamp based on the corresponding historical states. 
Mathematically, for the $p$th agent $v^{(p)}$, at time stamp $t$, let $\x_{t-\delta:t}^{(p)}\in\mathbb{R}^{\delta \times d}$ be the measurements of the observed clip with length $\delta$ and feature dimension $d$, recording $v^{(p)}$'s states within time interval $(t- \delta,t]$; and $\x_{t:t+\delta}^{(p)}\in\mathbb{R}^{\delta \times d}$ be the measurements of the corresponding ground-truth clip in the future, reflecting $v^{(q)}$'s states within $(t,t+\delta]$. We note that the length of historical and future sequences could be different, while we here use $\delta$ to simplify notation. Our~\emph{online} predictor $f_t(\cdot)$ aims to produce 
$$\{ \widehat{\x}_{t:t+\delta}^{(p)} \}_{p=1}^N = f_t ( \{ \x_{t-\delta:t}^{(p)} \}_{p=1}^N )
$$ 
to approximate the ground-truth $\{ \x_{t:t+\delta}^{(p)} \}_{p=1}^N$, where the input data $\x_{t-\delta:t}^{(p)}$ is captured from streaming data flow at time stamp $t$. Note that the model $f_t(\cdot)$ indexed by $t$ is updated online. The online setting is crucial because the online predictor can effectively adapt to the unseen, or highly dynamic data distribution.

In contrast, previous works~\cite{Kipf_2018_ICML_NRI,Hu_2020_CVPR_CMP,Mao_2019_ICCV_LTD,Cui_2020_CVPR_LDR,Cai_2020_ECCV_LPJ} mostly consider an \emph{offline} setting, which includes training and testing phases. In the training phase, a predictor is optimized based on a training dataset; in the testing phase, the predictor is fixed and deployed on a testing dataset.  Mathematically, let $f(\cdot)$ be an offline predictor, $\mathcal{X}_{\rm train}$ be the training dataset and $\mathcal{X}_{\rm test}$ be the testing dataset, the training and testing phases work as 
\begin{equation*}
    \begin{aligned}
          \mathrm{Training}: \{ \widehat{\x}_{0:\delta}^{(p)} \}_{p=1}^N &= f ( \{ \x_{-\delta:0}^{(p)} \}_{p=1}^N ), ~~ {\bf x} \sim \mathcal{X}_{\rm train}, \\
          \mathrm{Testing}: \{ \widehat{\y}_{0:\delta}^{(p)} \}_{p=1}^N &= f ( \{ \y_{-\delta:0}^{(p)} \}_{p=1}^N ), ~~ {\bf \y} \sim \mathcal{X}_{\rm test}.
    \end{aligned}
\end{equation*}
These offline methods assume the training and testing time-series should share the same data distribution and the resulting offline predictors would be hardly adapt to domain shift in the testing phase.

To design an online predictor, our core strategy is to exploit the dynamic mutual relations of agents and capture their evolution status through online learning. To model pairwise collaborations, we introduce a key concept, \emph{collaborative pair}, which groups two agents and can be used to reflect the effect from one agent to another in a multi-agent system. For example, the $p$th agent $v^{(p)}$ and the $q$th agent $v^{(q)}$ form a collaborative pair $(v^{(p)},v^{(q)})$ to indicate the directed effect from $v^{(q)}$ to $v^{(p)}$. In this way, to predict the future status of the $p$th agent, we consider $\widehat{\x}_{t:t+\delta}^{(p)} = f_{t} ( \{\x_{t-\delta:t}^{(p)}, \x_{t-\delta:t}^{(q)}\}_{q=1}^{N} )$, which exploits information from all the collaborative pairs of $v^{(p)}$ to benefit its prediction. This model considers each collaborative pair as a basic element, which is different from previous works that regard each individual agent as a basic element~\cite{Mao_2019_ICCV_LTD,Cui_2020_CVPR_LDR}.

\section{Collaborative Prediction Unit}
\label{sec:CoPU}
To develop an online multi-agent forecasting model, we propose a collaborative prediction unit (CoPU) as a basic module, which predicts the status of each agent from associated collaborative pairs via an collaborative graph.

\subsection{Collaborative graph}
To exploit the interactions among agents, we propose a collaborative graph that explicitly models the pairwise correlations. Let $G(\mathcal{V}, \mathcal{F}, \mathbf{W}_t)$ be a collaborative graph, where $\mathcal{V} = \{v^{(1)}, v^{(2)}, \cdots, v^{(N)} \}$ is the agent set with $v^{(p)}$ modeling the $p$th agent, $\mathcal{F} = \{(v^{(p)}, v^{(q)})\}_{p,q=1}^N$ is the set of collaborative pairs with $(v^{(p)}, v^{(q)})$ modeling the $(p,q)$th collaborative pair; and ${\bf W}_t \in \R^{N \times N}$ is the collaborative graph adjacency matrix trained online, whose $(p,q)$th element $w_t^{(p,q)}$ reflects the influence level from $v^{(q)}$ to $v^{(p)}$ at time stamp $t$. Note that this collaborative graph is asymmetric and dynamic; that is, each collaborative edge represents a directional relationship between agents and each collaborative edge weight is time-varying during the online training process.
\begin{figure}[tb]
    \centering
    \includegraphics[width=0.49\textwidth]{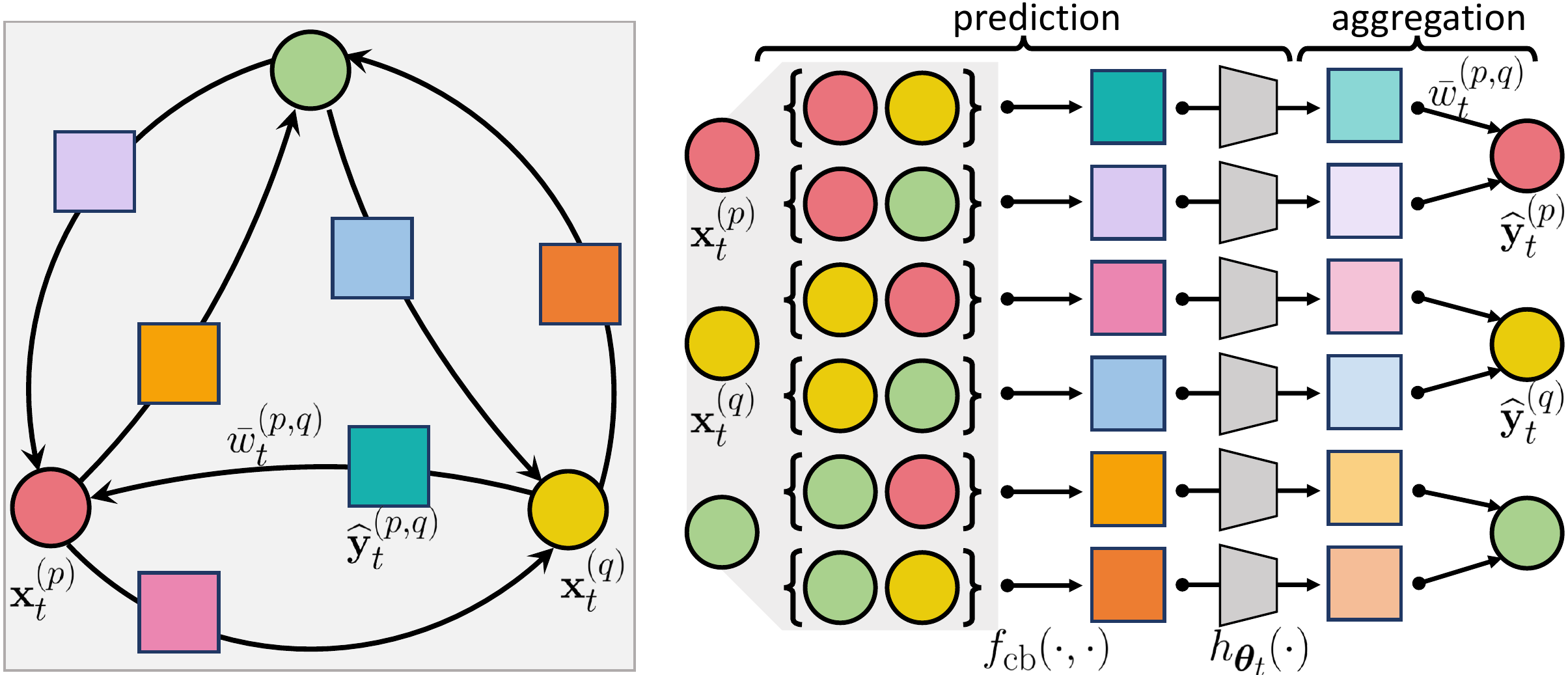}
    \caption{\small Collaborative prediction unit (CoPU) sketched with three agents for example. The left plot shows the CoPU based on a collaborative graph to learn and aggregate information from collaborative pairs, where the circled nodes represents the agents and the squared nodes represents the collaborative pairs. The right plot shows the detailed computation, where the collaborative pairs enables the late aggregation of agents' predictions.}
    \label{fig:copu}
\end{figure}

Different from many ordinary graphs, the collaborative graph has two types of nodes: agent and collaborative pair. Agents are actual nodes; while collaborative pairs are virtual nodes as they are naturally obtained when agents are given, which have directions. To predict the future status of each agent, we rely on each of its associated collaborative pairs to produce a possible future status for such an agent; and then, the final status is obtained by averaging all the possible statuses based on the weighted collaborative edges.

Different from many previous graph-convolution-based and message-passing-based models~\cite{Mao_2019_ICCV_LTD,Cui_2020_CVPR_LDR,Cai_2020_ECCV_LPJ,Kipf_2018_ICML_NRI, Graber_2020_CVPR, NEURIPS2020_e4d8163c}, where each node shares the same node feature to all of its neighbors without considering the demand of each one of its neighbors, our model, based on a collaborative pair, $v^{(q)}$ helps the prediction of $v^{(p)}$ by sharing customized information according to~$v^{(p)}$'s status. 

\subsection{Model design}
\label{sec:model_design}
Based on the collaborative graph, to construct the CoPU, we consider two key operations in series: collaborative-pair-based prediction, which predicts an agent's future status based on an associated collaborative pair, which generate all possible predictions with the collaborative effects between agents, and weighted collaborative-graph-based aggregation, which obtains the final prediction. Fig.~\ref{fig:copu} illustrates the CoPU, where the left plot shows the modeling thoughts, and the right plot shows the detailed computation.

\mypar{Collaborative-pair-based prediction}
To explicitly form a collaborative pair, which represent the collaborative states of two agent in history, we propose a combination function $f_{\rm cb}(\cdot,\cdot)$ to directedly combine the measurements of two agents. Given the observed measurements $\x_{t-\delta:t}^{(p)}$, $\x_{t-\delta:t}^{(q)}\in\mathbb{R}^{\delta \times d}$, the state of the collaborative pair $(v^{(p)}, v^{(q)})$ is
\begin{equation}
\label{eq:collab_pair_formulation}
    \x_{t-\delta:t}^{(p, q)} = f_{\rm cb}(\x_{t-\delta:t}^{(p)}, \x_{t-\delta:t}^{(q)}) \in \mathbb{R}^{\delta \times d'}.
\end{equation}
The collaborative state $\x_{t-\delta:t}^{(p, q)}$ carries the directed effects from the agent $v^{(q)}$ to the agent $v^{(p)}$.

Based on the collaborative pairs and their states, we propose a \emph{collaborative predictor}, which embeds any collaborative pair to predict an agent's future status under the corresponding effects. 
Let the collaborative predictor be $h_{\bm{\theta}_{t}}(\cdot)$, where  $\bm{\theta}_{t}$ is the parameter trained online at time $t$. To predict $v^{(p)}$ under the collaborative effects from $v^{(q)}$, the predicted status is
\begin{equation}
\label{eq:temp_pattern_learner}
    \widehat{\x}^{(p,q)}_{t:t+\delta} = h_{\bm{\theta}_{t}}({\x}_{t-\delta:t}^{(p,q)})\in\mathbb{R}^{\delta \times d}.
\end{equation}
The physical meaning of $\widehat{\x}^{(p,q)}_{t:t+\delta}$ is one possible future trajectory for $v^{(p)}$ by considering the influence from $v^{(q)}$. 

For a collaborative pair, the aggregation function $f_{\rm cb}(\cdot,\cdot)$ and the predictor $h_{\bm{\theta}_{t}}(\cdot)$ could be designed in numerous forms. Here we specifically consider three typical forms:
\begin{itemize}
    \item \emph{Autoregressive model.}
          Let $f_{\rm cb}(\cdot,\cdot)$ concatenate the multi-order differences of two agents, i.e., ${\x}_{t-\delta:t}^{(p, q)} = [\x_{t-\delta:t}^{(p)}, ~\x_{t-\delta:t}^{(q)}-\x_{t-\delta:t}^{(p)}, ~\dots, ~(\x_{t-\delta:t}^{(q)}-\x_{t-\delta:t}^{(p)})^{D}]\in\mathbb{R}^{\delta \times (D+1)d}$, which provides a polynomial distance feature.
          We then vectorize ${\x}_{t-\delta:t}^{(p,q)}$ by reshaping it along time, and directly set a linear $h_{\bm{\theta}_{t}}(\cdot)$ to learn $\widehat{\x}^{(p,q)}_{t:t+\delta}$.
    \item \emph{LSTM model.}
          Let $f_{\rm cb}(\cdot,\cdot)$ concatenate the measurements of the two agents, i.e., ${\x}_{t-\delta:t}^{(p,q)}=[\x_{t-\delta:t}^{(p)}, \x_{t-\delta:t}^{(q)}]\in\mathbb{R}^{\delta \times 2d}$. The concatenation of the raw measurements reduces the dimensions and encourages more flexible information. We employ a one-layer LSTM as $h_{\bm{\theta}_{t}}(\cdot)$ on ${\x}_{t-\delta:t}^{(p,q)}$ to extract the temporal dynamics.
    \item \emph{Temporal convolution model.}
          We also consider $f_{\rm cb}(\cdot,\cdot)$ as the raw measurement concatenation.
          We then perform two layers of 1-D convolution along time to capture sequential dynamics. A ${\tanh(\cdot)}$ function is used as the activation after the first convolution to enhance flexibility.
\end{itemize}
Each agent has $N$ collaborative pairs, so we predict $N$ possible future statuses for one agent.  To obtain the final prediction, we need to adaptively aggregate the $N$ possibilities. 

\mypar{Collaborative-graph-based aggregation}
Given the possible future status for an agent, which is predicted based on each associated collaborative pair, we aggregate these statuses with different weights on the edge of the collaborative graph, where the edge weights are updated online and reflect the influence level from one agent to another.
Mathematically, let the graph adjacency matrix at iteration step $t$ be ${\bf W}_t$, whose the $(p,q)$th element is ${w}_{t}^{(p,q)}$.
The final prediction of $v^{(p)}$ under the effects from all the associated collaborative pairs is formulated as
\begin{equation}
    \widehat{\x}_{t:t+\delta}^{(p)} = 
    \frac{\sum_{q=1}^{N} {w}_{t}^{(p,q)} \widehat{\x}_{t:t+\delta}^{(p,q)}} {\sum_{q'=1}^{N} {w}_{t}^{(p,q')} },
\end{equation}
where the edge weights are normalized for any agent $v^{(p)}$.
In all, the feed-forward operations of the CoPU is formulated as
\begin{equation}
\label{eq:entire_copu}
    \widehat{\x}_{t:t+\delta}^{(p)} = 
    \sum_{q=1}^{N}
    \bar{w}_{t}^{(p,q)}
    h_{\bm{\theta}_{t}} 
    (
        f_{\rm cb}(\x_{t-\delta:t}^{(p)}, \x_{t-\delta:t}^{(q)})
    ),
\end{equation}
where $\bar{w}_{t}^{(p,q)} = {w}_{t}^{(p,q)}/{\sum_{q'=1}^{N} {w}_{t}^{(p,q')} }$ denotes the normalized collaborative edge weights.

Compared to most common graph-based models, which bridge the same types of nodes with a scalar edge weight, the proposed CoPU considers the heterogeneous nodes of agents and collaborative pairs, carrying individual dynamics the interactive effects in a hybrid manner. The CoPU essentially aggregates the comprehensive information from the collaborative pairs with different importance.
Moreover, the proposed CoPU has a similar framework with the online expert mixture algorithm~\cite{Hazan_2016_FTM_IOC,Shen_2019_JMLR_RFO}, as the different collaborative predictors associated with an agent could be regarded as experts; however, each collaborative predictor explicitly exploits the customized information for the associated agent based on the corresponding collaborative pair, instead of producing an untargeted prediction independently like most expert mixture methods.

\subsection{Optimization}
CoPU enables a complete multi-agent forecasting pipeline, and the internal parameters could be trained given the prediction targets.
To train the CoPU online, at each iteration $t$, we update the model parameters to minimize the loss function:
\begin{equation}
    \mathcal{L} = 
    \sum_{t=1}^T 
    \sum_{p=1}^N 
    \mathcal{L}_t 
    \left( 
        \sum_{q=1}^N 
        \bar{w}_{t}^{(p,q)}
        h_{\bm{\theta}_{t}}
        \left(
            \x_{t-\delta:t}^{(p,q)}
        \right), \x_{t:t+\delta}^{(p)} 
    \right),
    \label{eq:original_loss}
\end{equation}
where $\mathcal{L}_t(\cdot)$ denotes a fixed and convex loss function evaluated at time $t$; $T$ is the length of iterations before the current stamp. 
Since $\mathcal{L}_t(\cdot)$ is a convex function (e.g., $\ell_2$ loss) w.r.t. $h_{\bm{\theta}_{t}}(\x_t^{(p,q)})$, we consider an upper bound of it,
\begin{equation}
    \mathcal{L} :=
    \sum_{t=1}^T  
    \sum_{p,q=1}^N 
        \bar{w}_{t}^{(p,q)}  
        \mathcal{L}_t 
        \left( 
            h_{\bm{\theta}_{t}} 
            \left(
                \x_{t-\delta:t}^{(p,q)} 
            \right),  
            \x_{t:t+\delta}^{(p)} 
        \right).
    \label{eq:practical_loss}
\end{equation}
To minimize Eq.~\eqref{eq:practical_loss}, 
instead of using common online gradient descent to update $\bar{w}_{t}^{(p,q)}$ and ${\bm{\theta}_{t}}$ together, we consider different updating mechanisms, which bring theoretical guarantees.
We separately optimize the parameters $\bm{\theta}_{t}$ and $\bar{w}_{t}^{(p,q)}$ with different steps. First, at time $t$, the parameter $\bm{\theta}_{t}$ could be trained by online gradient decent; that is,
\begin{equation*}
    \bm{\theta}_{t+1} ~ \leftarrow ~ \bm{\theta}_{t} -\eta \:  \nabla{\cal L}_t \left(h_{\bm{\theta}_{t}}^{(p,q)} \right),
\end{equation*}
where we use ${\cal L}_t(h_{\bm{\theta}_{t}}^{(p,q)})$ to simplify ${\cal L}_t(h_{\bm{\theta}_{t}} (\x_{t-\delta:t}^{(p,q)}), \x_{t:t+\delta}^{(p)})$; $\eta\in(0,1)$ is the learning rate. Next, for $\bar{w}_{t}^{(p,q)}$, we employ a multiplicative exponentiated update, which is inspired from the randomized weighted majority of online learning~\cite{Hazan_2016_FTM_IOC}. We assume the optimal edge weight to be $w^{(p,q)}\in[0,1]$, and $\bar{w}_{t}^{(p,q)}$ is updated through
\begin{equation}
\label{eq:w_update}
\begin{aligned}
w_{t+1}^{(p,q)} \leftarrow & \arg\min_{w^{(p,q)}}\, \eta \: 
{\cal L}_t\left( h_{\bm{\theta}_{t} } \right)
\left ({w}^{(p,q)} - \bar{w}^{(p,q)}_{t} \right) \\
& + {\cal D}_{\rm KL}({w}^{(p,q)} \| \bar{w}^{(p,q)}_{t})
\\
= & ~
\bar{w}_{t}^{(p,q)}\exp\left(-\eta \: {\cal L}_t
\left(
h_{\bm{\theta}_{t}}
\right) \right),
\\
\end{aligned}
\end{equation}
where  ${\cal D}_{\rm KL}({w}^{(p,q)}\|\bar{w}_{t}^{(p,q)})$ denotes the KL-divergence between the optimized ${w}^{(p,q)}$ and the collaborative weight at the last iteration step. The optimization ensures the continuity and smoothness of the collaborative graph structure by minimizing the element-wise distances and distribution divergence. We also normalize the new collaborative edge weights by $\bar{w}_{t+1}^{(p,q)} = w_{t+1}^{(p,q)}/\sum_{p=1}^N w_{t+1}^{(p,q)}$. 

In previous graph neural networks~\cite{Li_2020_CVPR_DMG, Shi_2019_CVPR_TSA, Mao_2019_ICCV_LTD, Cui_2020_CVPR_LDR}, graphs are obtained through either heuristic designs or end-to-end learning in a black-box, which lack a clear objective and theoretical justification. Compared to these works, the proposed method obtains the edge weights of the collaborative graph through optimizing an explicit objective function~\eqref{eq:w_update}, which could be derived as an update function based on the input features and historical states in an online setting. 

\subsection{Comparison with previous works}

The proposed CoPU is seemingly similar to spatio-temporal-graph convolution operations~\cite{Li_2020_CVPR_DMG, Shi_2019_CVPR_TSA, Yu_2018_IJCAI_STG} and neural-message-passing operations~\cite{Kipf_2018_ICML_NRI, Graber_2020_CVPR, NEURIPS2020_e4d8163c}. Here we clarify the differences;

Compared to spatio-temporal-graph convolution operations~\cite{Li_2020_CVPR_DMG, Shi_2019_CVPR_TSA, Yu_2018_IJCAI_STG}, the proposed CoPU is novel from three aspects:
\begin{itemize}
        \item Interpretable graph learning. Previous models usually build non-grid structures by using predefined graphs or adjusting the graph in an intractable black-box. Each CoPU contains an interpretable graph that optimizes an explicit objective function for model training and has a theoretical implication.
        \item Customized feature propagation. Previous works consider each node shares the same feature to all of its neighbors without considering the demands of neighbors. In CoPU, a node shares a customized feature to each neighbor via the collaborative pairs.
        \item Interpretable output. Previous methods build deep graph networks to learn the agent's dynamics in the hidden high-dimensional spaces, lacking the interpretation of feature representation; while each CoPU predicts agents' future statuses, which lie in the original measurement space.
\end{itemize}

Previous neural-message-passing-based works~\cite{Kipf_2018_ICML_NRI, Graber_2020_CVPR, NEURIPS2020_e4d8163c} include two steps to propagate information: v2e, which transforms the information of a pair of nodes to an edge, and e2v, which aggregates the information of edges to a node. Compared to those methods, the proposed CoPU is novel from three aspects:
    \begin{itemize}
        \item  Interpretable graph learning. Previous  works either build predefined graphs based on physical constraints or learn graph structures with errors and trials in black-boxes; while the edge weights of our collaborative graph are obtained by optimizing an explicit objective function and has theoretical justification.

        \item Meaning of v2e. In the v2e step, previous works update the embedding of each edge, which is an intermediate-feature-level fusion; while our CoPU directly generates the prediction of a node conditioned on another node, which is an output-level fusion.
 
        \item Meaning of e2v. In the e2v step, previous works consider the mean aggregation, which assigns the same weight for each one of the neighbors; while our CoPU adopts the weighted aggregation, where the weights are the edge weights in the collaborative graph, obtained through multiplicative updating.
    \end{itemize}

\subsection{Theoretical interpretation}
To understand CoPU, we conduct regret analysis. We consider the best single collaborative predictor as our regret baseline and analyze the gap from our CoPU to this predictor. The best predictor has a straightforward meaning and its corresponding collaborative pair makes the most significant effect in predicting an agent's future status. We aim to demonstrate that the regret tends to be finite in the online setting, which indicates the properties of our CoPU and the training algorithm from two aspects:
\begin{itemize}
    \item The CoPU trained online could converge to a statistically stable state during the training process. 
    \item The performance of CoPU achieves a limited gap from the best single collaborative pair in statistics, reflecting that CoPU captures significant and reasonable collaborations for effective and interpretable prediction.
\end{itemize}

Here we define the collaborative pair in the best collaborative predictor.  For the $p$th agent, it has $N$ collaborative pairs,  $\{(v^{(p)},v^{(q)})\}_{q=1}^N$.
The collaborative pair $(v^{(p)}, v^{(q^*)})$ for $v^{(p)}$ in the best collaborative predictor is obtained via
\begin{equation}
    q^* = \arg\min_q  \sum_{t=1}^T \mathcal{L}_t \left( h_{\bm{\theta}^*} \left( {\bf \x}_{t-\delta:t}^{(p,q)}\right)  \right),
\end{equation}
where $\bm{\theta}^*$ denotes the parameters of the best collaborative predictor at iteration step $T$.
However, it is hard to implement an online model by directly determining $(v^{(p)}, v^{(q^*)})$ and meanwhile optimizing $\bm{\theta}^*$; that is why CoPU builds a collaborative graph with trainable weights to achieve a soft collaborative mechanism; see Eq.\eqref{eq:entire_copu}.

To measure the performance gap between the best collaborative predictor and CoPU, we define the static regret based on the difference between the accumulated loss of CoPU and that of the best collaborative predictor in hindsight,
\begin{equation*}
{\rm R}_{T} =
\frac{1}{T}
\sum_{t=1}^T
  \left (
      \mathcal{L}_t
        \left (
          \sum_{q=1}^N
            \bar{w}_{t}^{(p,q)}h_{\bm{\theta}_{t}}^{(p,q)}
        \right )
    -
      \mathcal{L}_t
        \left (
          h_{\bm{\theta}^*}^{(p,q^*)}
        \right )
\right ),
\end{equation*}
where we simplify $h_{\bm{\theta}_t}(\x_t^{(p,q)})$ as $h_{\bm{\theta}_t}^{(p,q)}$.
The regret reflects the gap between the CoPU and the best collaborative predictor, as well as the convergence rate of online learning.

To study the CoPU and the performance gap, we aim to calculate an upper bound of the regret. To facilitate the closed-form theoretical analysis, we consider a basic predictor (a linear autoregressive model) and propose the following assumptions that are commonly satisfied by our model:

\noindent\textbf{(AS1)}
\emph{For the collaborative predictor at time $t$, $h_{\bm{\theta}_{t}}$, given any collaborative pair, the loss ${\cal L}_t(h_{\bm{\theta}_{t}})$  is convex w.r.t. ${\bm{\theta}_t}$.}

\noindent\textbf{(AS2)}
\emph{For any bounded $\bm{\theta}_{t}$, where 
$\|\bm{\theta}_{t}\|_2\leq C_{\bm{\theta}}$, and any collaborative pair, we have a bounded loss, 
${\cal L}_t( h_{\bm{\theta}_{t}}) \in [0,1]$, and a bounded gradient, 
$ \| \nabla {\cal L}_t ( h_{\bm{\theta}_{t}}) \|_2 \leq L $.}

Then, we propose two lemmas, which help to derive the upper bounds of regret. We first analyze the loss gap between arbitrary collaborative predictors and the best collaborative predictor.
\begin{myLem}
\label{Lem1}
Let $h_{\bm{\theta}^*}^{(p,q^*)}$ and $h_{\bm{\theta}_{t}}^{(p,k)}$ be the predictions of the best collaborative predictor and arbitrary one. Under {\bf AS1} and {\bf AS2}, for any collaborative pair $(v^{(p)}, v^{(k)})$, we have
\begin{small}
\begin{equation*}
    \frac{1}{T}
    \sum_{t=1}^{T}
    \left (
    \mathcal{L}_t 
      \left (
        h_{\bm{\theta}_{t}}^{(p,k)}
      \right )
    -
    \mathcal{L}_t 
      \left (
        h_{\bm{\theta}^*}^{(p,q^*)}
      \right )
    \right )
    \leq
    \frac{C_{\bm{\theta}}^2}{2\eta T}
    +
    \frac{\eta L^2}{2},
\end{equation*}
\end{small}
where $\eta$ is the learning rate; $C_{\bm{\theta}}$ is the upper bound of the norm of parameters; and $L$ is the Lipschitz constant.
\end{myLem}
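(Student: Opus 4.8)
The plan is to recognize the online parameter sequence $\{\bm{\theta}_t\}_{t=1}^T$ as the iterates produced by online gradient descent (OGD) on the per-round losses and to run the standard OGD regret argument, with one point handled carefully: the hindsight comparator must be read off the \emph{same} collaborative pair that drives the online iterate. Fix the agent $v^{(p)}$ and a pair index $k$, and set $g_t(\bm{\theta}) := \mathcal{L}_t(h_{\bm{\theta}}^{(p,k)})$ as the loss revealed at round $t$. Assumption {\bf AS1} makes each $g_t$ convex in $\bm{\theta}$, and {\bf AS2} gives $\|\nabla g_t(\bm{\theta})\|_2 \le L$ on the ball $\{\bm{\theta} : \|\bm{\theta}\|_2 \le C_{\bm{\theta}}\}$; these are precisely the ingredients the OGD bound consumes, so no further structure on $h$ is required beyond the assumed linear autoregressive form.

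First I would write the update $\bm{\theta}_{t+1} = \bm{\theta}_t - \eta \nabla g_t(\bm{\theta}_t)$ (started from $\bm{\theta}_1 = \mathbf{0}$ inside the ball) and expand, for an arbitrary fixed comparator $\bm{\theta}$,
\begin{equation*}
\|\bm{\theta}_{t+1} - \bm{\theta}\|_2^2 = \|\bm{\theta}_t - \bm{\theta}\|_2^2 - 2\eta \langle \nabla g_t(\bm{\theta}_t), \bm{\theta}_t - \bm{\theta} \rangle + \eta^2 \|\nabla g_t(\bm{\theta}_t)\|_2^2 .
\end{equation*}
Solving for the inner product, invoking convexity through $g_t(\bm{\theta}_t) - g_t(\bm{\theta}) \le \langle \nabla g_t(\bm{\theta}_t), \bm{\theta}_t - \bm{\theta}\rangle$, and bounding $\|\nabla g_t(\bm{\theta}_t)\|_2^2 \le L^2$ via {\bf AS2}, I would sum over $t = 1, \dots, T$. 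The distance terms telescope to $\|\bm{\theta}_1 - \bm{\theta}\|_2^2 \le C_{\bm{\theta}}^2$, and after dividing by $2\eta T$ this yields
\begin{equation*}
\frac{1}{T} \sum_{t=1}^T \big( g_t(\bm{\theta}_t) - g_t(\bm{\theta}) \big) \le \frac{C_{\bm{\theta}}^2}{2\eta T} + \frac{\eta L^2}{2},
\end{equation*}
which already matches the claimed right-hand side.

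The step I expect to be the main obstacle is translating $g_t(\bm{\theta}) = \mathcal{L}_t(h_{\bm{\theta}}^{(p,k)})$ into the comparator written in the lemma, $\mathcal{L}_t(h_{\bm{\theta}^*}^{(p,q^*)})$. The OGD inequality only licenses a comparator evaluated on the same pair $k$ as the online iterate, i.e.\ it delivers $\frac{1}{T}\sum_t(\mathcal{L}_t(h_{\bm{\theta}_t}^{(p,k)}) - \mathcal{L}_t(h_{\bm{\theta}^*}^{(p,k)}))$ on the left. One must resist replacing $h_{\bm{\theta}^*}^{(p,k)}$ by $h_{\bm{\theta}^*}^{(p,q^*)}$ through the defining relation $\sum_t \mathcal{L}_t(h_{\bm{\theta}^*}^{(p,q^*)}) \le \sum_t \mathcal{L}_t(h_{\bm{\theta}^*}^{(p,k)})$: since $q^*$ \emph{minimizes} the comparator loss, this substitution decreases the subtrahend and hence \emph{increases} the left-hand side, producing a lower bound rather than the desired upper bound. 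The clean and correct route is instead to take the comparator $\bm{\theta} = \bm{\theta}^*$ with the matched pair $k = q^*$, so that both the online iterate and the comparator are evaluated at the best pair $q^*$; the bound then specializes to $\frac{1}{T}\sum_t(\mathcal{L}_t(h_{\bm{\theta}_t}^{(p,q^*)}) - \mathcal{L}_t(h_{\bm{\theta}^*}^{(p,q^*)})) \le \frac{C_{\bm{\theta}}^2}{2\eta T} + \frac{\eta L^2}{2}$, which is exactly the inequality needed to feed into the weight-regret lemma and the main theorem. I would therefore state the lemma in this matched form (equivalently, for a generic pair $k$ with the comparator read off the same pair $k$), since that is the version that is both provable and actually used downstream.
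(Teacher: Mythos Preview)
Your argument is the same standard online-gradient-descent regret computation the paper uses: expand $\|\bm{\theta}_{t+1}-\bm{\theta}\|_2^2$, apply convexity (\textbf{AS1}) to replace the inner product by a loss difference, bound the gradient norm by $L$ (\textbf{AS2}), telescope, and use $\|\bm{\theta}_1-\bm{\theta}\|_2^2\le C_{\bm{\theta}}^2$ (the paper phrases this as ``$\bm{\theta}_1$ very small''). So the core approach matches.

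Where you are actually \emph{more} careful than the paper is the comparator step. The paper's proof goes from
\[
\sum_{t=1}^{T}\bigl(\mathcal{L}_t(h_{\bm{\theta}_t}^{(p,k)})-\mathcal{L}_t(h_{\bm{\theta}}^{(p,k)})\bigr)\le \tfrac{C_{\bm{\theta}}^2}{2\eta}+\tfrac{\eta L^2 T}{2}
\]
directly to the statement with $h_{\bm{\theta}^*}^{(p,q^*)}$ in the subtrahend, invoking only ``in hindsight we could choose $\bm{\theta}$ and use $h_{\bm{\theta}^*}^{(p,q^*)}$''. You correctly observe that for a generic $k$ this substitution goes the wrong way, since $q^*$ minimizes the cumulative loss and replacing $(p,k)$ by $(p,q^*)$ in the comparator can only enlarge the left-hand side. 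Your fix---specialize to $k=q^*$ so that the online iterate and the comparator are evaluated on the same pair---is exactly right, and it is the only instance of Lemma~\ref{Lem1} that Theorem~\ref{Them1} actually consumes (Lemma~\ref{Lem2} is applied with the free index $k$ set to $q^*$ as well). So your proposed ``matched'' formulation is the version that is both provable and sufficient; the paper's phrasing ``for any collaborative pair $(v^{(p)},v^{(k)})$'' should be read with this specialization in mind.
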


\begin{proof}
Given any determined parameter $\bm{\theta}$ of the collaborative predictor, for any collaborative pair $(v^{(p)},v^{(k)})$, we have
\begin{small}
\begin{equation}
\label{Eq:Apx1}
        \| \bm{\theta}_{t+1} - \bm{\theta} \|_2^2 
        = \| \bm{\theta}_{t} - 
             \eta \nabla^{\top} \mathcal{L}_t(h_{\bm{\theta}_{t}}^{(p,k)}) -
             \bm{\theta} \|_2^2 
\end{equation}
\end{small}
Considering the convexity property of the loss function (see {\bf AS1}), the gradient of the loss function satisfies
\begin{small}
\begin{equation}
\label{Eq:Apx2}
    \nabla^{\top}
    \mathcal{L}_t(h_{\bm{\theta}_{t}}^{(p,k)})
    \geq
    \frac{\mathcal{L}_t(h_{\bm{\theta}_{t}}^{(p,k)})
         -\mathcal{L}_t(h_{\bm{\theta}}^{(p,k)})}
         {\bm{\theta}_{t} - \bm{\theta}}
\end{equation}
\end{small}
Plugging Inequality~\eqref{Eq:Apx2} into Eq.~\eqref{Eq:Apx1}, we could easily obtain the following inequality by rearranging some terms,
{\begin{small}
\begin{equation}
\label{eq:Apx3}
    \begin{aligned}
    & \mathcal{L}_t(h_{\bm{\theta}_{t}}^{(p,k)})
         -\mathcal{L}_t(h_{\bm{\theta}}^{(p,k)}) \\
         \leq ~ &
         \frac{\eta}{2}
         \| \nabla 
         \mathcal{L}_t(h_{\bm{\theta}_{t}}^{(p,k)}) \|_2^2 
         + \frac{\| \bm{\theta}_{t} - \bm{\theta} \|_2^2 -
                 \| \bm{\theta}_{t+1} - \bm{\theta} \|_2^2}
                {2 \eta}
    \end{aligned}
\end{equation}
\end{small}
}
We further consider the assumption {\bf AS2} that points out the upper bound of $\|\bm{\theta}_{t}\|_2$ and the Lipschitz constant of the gradient $\nabla \mathcal{L}_t(h_{\bm{\theta}_{t}})$; that is, for any collaborative predictor, we have $\|\bm{\theta}_{t}\|_2 \leq C_{\bm{\theta}}$ and $\|\nabla \mathcal{L}_t(h_{\bm{\theta}_{t}})\|_2 \leq L$. 
We then summing Inequality~\eqref{eq:Apx3} over $t=1,\dots,T$, we could easily derive 
\begin{small}
\begin{equation}
    \label{eq:Apx5}
    \sum_{t=1}^{T}
      \left (
        \mathcal{L}_t(h_{\bm{\theta}_{t}}^{(p,k)})
         -\mathcal{L}_t(h_{\bm{\theta}}^{(p,k)})
      \right )
     \leq
     \frac{C_{\bm{\theta}}^2}{2\eta}
      + \frac{\eta L^2 T}{2}
\end{equation}
\end{small}
where we assume the initial parameter $\bm{\theta}_{1}$ to have a very small value, leading to $\| \bm{\theta}_{1} - \bm{\theta} \|_2^2 \approx \|\bm{\theta}\|_2^2=C_{\bm{\theta}}^2$, and consider $\| \bm{\theta}_{T+1} - \bm{\theta} \|_2^2$ to be non-negative.

Additionally, we divide by the number of iterations $T$ at the both side of Inequality~\eqref{eq:Apx5} to complete the proof.
\begin{small}
\begin{equation}
    \label{eq:Apx6}
    \frac{1}{T}
    \sum_{t=1}^{T}
      \left (
        \mathcal{L}_t(h_{\bm{\theta}_{t}}^{(p,k)})
         -\mathcal{L}_t(h_{\bm{\theta}^*}^{(p,q^*)})
      \right )
    \leq \frac{C_{\bm{\theta}}^2}{2\eta T} + \frac{\eta L^2}{2}
\end{equation}
\end{small}
\end{proof}

Note that Lemma~\ref{Lem1} are hold for any collaborative pair, so that, in hindsight, we could choose $\bm{\theta}$ to be the parameter at iteration step $T$ and use the predicted status $h_{\bm{\theta}^*}^{(p,q^*)}$, which is output from the collaborative predictor corresponding to the single optimal collaborative pair $(v^{(p)},v^{(q^*)})$.

We next analyze the gap between CoPU and an arbitrary collaborative predictor.
\begin{myLem}
\label{Lem2}
Under {\bf AS1} and {\bf AS2}, for any collaborative pair $(v^{(p)},v^{(k)})$ fed into the predictor $h_{\bm{\theta}_{t}}(\cdot)$, it holds that
\begin{small}
\begin{equation*}
        \frac{1}{T}
        \sum_{t=1}^{T}
        \left (
        \sum_{q=1}^{N}
            \bar{w}_{t}^{(p,q)}
            \mathcal{L}_t
              \left (
                h_{\bm{\theta}_{t}}^{(p,q)}
              \right )
      -
      \mathcal{L}_t
              \left (
                h_{\bm{\theta}_{t}}^{(p,k)}
              \right )
        \right )
      \leq 
      \eta + \frac{\log N}{\eta T},
\end{equation*}
\end{small}
where $N$ is the number of agents.
\end{myLem}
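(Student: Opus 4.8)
The plan is to recognize the multiplicative update in~\eqref{eq:w_update} together with its renormalization step as an instance of the Hedge / randomized weighted majority scheme (the $N$ collaborative predictors $h_{\bm{\theta}_t}^{(p,q)}$, $q=1,\dots,N$, playing the role of experts for agent $v^{(p)}$), and to run the standard potential-function argument. The only input from the assumptions that I expect to need is the boundedness in \textbf{AS2}, namely $\mathcal{L}_t(h_{\bm{\theta}_t}^{(p,q)})\in[0,1]$ for every collaborative pair; the convexity in \textbf{AS1} is not used here.

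First I would unroll the weight update. Since $w_{t+1}^{(p,q)}=\bar{w}_t^{(p,q)}\exp(-\eta\,\mathcal{L}_t(h_{\bm{\theta}_t}^{(p,q)}))$ and $\bar{w}_{t+1}^{(p,\cdot)}$ is the renormalization of $w_{t+1}^{(p,\cdot)}$ over the $N$ collaborative pairs of $v^{(p)}$, with the usual uniform initialization $\bar{w}_1^{(p,q)}=1/N$ one obtains $\bar{w}_t^{(p,q)}\propto\exp(-\eta\sum_{s=1}^{t-1}\mathcal{L}_s(h_{\bm{\theta}_s}^{(p,q)}))$. I would therefore introduce the potential $W_t:=\sum_{q=1}^N\exp(-\eta\sum_{s=1}^{t-1}\mathcal{L}_s(h_{\bm{\theta}_s}^{(p,q)}))$, so that $W_1=N$ and $\bar{w}_t^{(p,q)}=\exp(-\eta\sum_{s<t}\mathcal{L}_s^{(p,q)})/W_t$, and then bound $\log(W_{T+1}/W_1)$ from two sides. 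For the upper bound, write $W_{t+1}/W_t=\sum_q\bar{w}_t^{(p,q)}\exp(-\eta\,\mathcal{L}_t^{(p,q)})$; since $\eta\,\mathcal{L}_t^{(p,q)}\ge 0$ the elementary inequality $e^{-x}\le 1-x+x^2$ applies, and using $(\mathcal{L}_t^{(p,q)})^2\le 1$ (from \textbf{AS2}) followed by $1+y\le e^y$ gives $W_{t+1}/W_t\le 1-\eta\sum_q\bar{w}_t^{(p,q)}\mathcal{L}_t^{(p,q)}+\eta^2\le\exp(-\eta\sum_q\bar{w}_t^{(p,q)}\mathcal{L}_t^{(p,q)}+\eta^2)$. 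Telescoping over $t=1,\dots,T$ yields $\log W_{T+1}\le\log N-\eta\sum_{t=1}^T\sum_q\bar{w}_t^{(p,q)}\mathcal{L}_t^{(p,q)}+\eta^2 T$. For the matching lower bound I would keep only the $k$th term of $W_{T+1}$, giving $\log W_{T+1}\ge-\eta\sum_{t=1}^T\mathcal{L}_t^{(p,k)}$.

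Finally, combining the two estimates gives $-\eta\sum_t\mathcal{L}_t^{(p,k)}\le\log N-\eta\sum_t\sum_q\bar{w}_t^{(p,q)}\mathcal{L}_t^{(p,q)}+\eta^2 T$, and rearranging and dividing by $\eta T$ produces exactly $\frac1T\sum_{t=1}^T\big(\sum_{q=1}^N\bar{w}_t^{(p,q)}\mathcal{L}_t(h_{\bm{\theta}_t}^{(p,q)})-\mathcal{L}_t(h_{\bm{\theta}_t}^{(p,k)})\big)\le\eta+\frac{\log N}{\eta T}$, since $k$ was arbitrary. The only genuinely delicate point — the ``hard part'' — is the bookkeeping for the renormalization built into~\eqref{eq:w_update}: one must verify that the per-step renormalization does not disturb the cumulative-loss form of $\bar{w}_t^{(p,q)}$ (equivalently, that the normalizing constants cancel in the telescoping identity for $W_t$), after which the remainder is the textbook Hedge regret analysis.
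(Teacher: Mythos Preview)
Your proposal is correct and follows essentially the same route as the paper: both run the standard Hedge/weighted-majority potential argument, using the inequalities $e^{-\eta x}\le 1-\eta x+\eta^2 x^2$ and $1+y\le e^y$ to upper-bound the potential, keeping the single $k$th term for the lower bound, and invoking only the boundedness $\mathcal{L}_t\in[0,1]$ from \textbf{AS2}. Your formulation of the potential directly as $W_t=\sum_q\exp(-\eta\sum_{s<t}\mathcal{L}_s^{(p,q)})$ makes the renormalization bookkeeping (your ``hard part'') more transparent than the paper's version, but the underlying argument is the same.
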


\begin{proof}
We first let ${W}_{t}^{(p)}$ be the sum of edge weights associated with agent $v^{(p)}$; that is ${W}_{t}^{(p)}=\sum_{q=1}^{N}w_{t}^{(p,q)}$.
Therefore, ${W}_{t}^{(p)}$ adjusted through exponentiated update at the time $t+1$ could be formulated as
{\begin{small}
\begin{equation}
\label{eq:Apx7}
    \begin{aligned}
        W_{t+1}^{(p)} = 
        \sum_{q=1}^{N} \bar{w}_{t}^{(p,q)}
        \exp 
        \left( 
          - \eta 
          \mathcal{L}_t \left( h_{\bm{\theta}_{t}}^{(p,q)} \right)
        \right)
    \end{aligned}
\end{equation}
\end{small}}
We represent $\mathcal{L}_t \left( h_{\bm{\theta}_{t}}^{(p,q)} \right)$ as $\mathcal{L}_t$ for simplification.
We consider the inequality $\exp (-\eta x) \leq 1 - \eta x + \eta^2 x^2$ for $\forall |\eta| \leq 1$, and Inequality~\eqref{eq:Apx7} satisfies
{\begin{small}
\begin{equation*}
    W_{t+1}^{(p)}
    \leq 
    1 - 
    \eta \sum_{q=1}^{N} 
         \bar{w}_{t}^{(p,q)} 
         \mathcal{L}_{t} 
    +
    \eta^2 \sum_{q=1}^{N} 
         \bar{w}_{t}^{(p,q)}
         \mathcal{L}_{t}^2
\end{equation*}
\end{small}}
Furthermore, according to $1 + x \leq e^x$, we have
{\begin{small}
\begin{equation}
\label{eq:Apx8}
     W_{t+1}^{(p)} \leq \exp
     \Big ( 
         - \eta 
         \sum_{q=1}^{N} 
         \bar{w}_{t}^{(p,q)}
         \mathcal{L}_{t} 
          +
         \eta^2 \sum_{q=1}^{N} 
         \bar{w}_{t}^{(p,q)} 
         \mathcal{L}_{t}^2
     \Big ) 
\end{equation}
\end{small}}
And we then consider the updated situation of Inequality~\eqref{eq:Apx8} along time when $t=1,\dots,T$, we then obtain
{\begin{small}
\begin{equation}
\label{eq:Apx9}
     W_{T+1}^{(p)} \leq \exp
     \Big ( 
         \sum_{t=1}^{T}
         \eta^2 
         \sum_{q=1}^{N} 
         \bar{w}_{t}^{(p,q)}
         \mathcal{L}_{t}^2
         -
         \eta 
         \sum_{q=1}^{N} 
         \bar{w}_{t}^{(p,q)} 
         \mathcal{L}_{t}
     \Big ) 
\end{equation}
\end{small}}
On the other hand, for the agent $v^{(p)}$ and any agent $v^{(k)}$, we have the inequality,
{
\begin{small}
\begin{equation}
\label{eq:Apx10}
    \begin{aligned}
        W_{T+1}^{(p)} 
        \geq~ & w_{T+1}^{(p,k)} \\
        =~ & w_{1}^{(p,k)}
            \exp \left (
                 -\eta \sum_{t=1}^{T}
                       \mathcal{L}_t
                       \left (
                       h_{\bm{\theta}_{t}}^{(p,k)}
                       \right )
                 \right ) \\
    \end{aligned}
\end{equation}
\end{small}}
Combining Inequality~\eqref{eq:Apx9} and Inequality~\eqref{eq:Apx10}, we arrive at
\begin{small}
\begin{equation}
\label{eq:Apx11}
    \begin{aligned}
    & \exp
    \Big ( 
         - \eta 
         \sum_{t=1}^{T}
         \sum_{q=1}^{N} 
         \bar{w}_{t}^{(p,q)} 
         \mathcal{L}_{t} 
         +
         \eta^2 
         \sum_{t=1}^{T}
         \sum_{q=1}^{N} 
         \bar{w}_{t}^{(p,q)}
         \mathcal{L}_{t}^2
    \Big ) \\
    \geq~ &
    w_{1}^{(p,k)}
    \exp \left (
      -\eta \sum_{t=1}^{T}
       \mathcal{L}_t
       \left ( h_{\bm{\theta}_{t}}^{(p,k)} \right )
         \right )
    \end{aligned}
\end{equation}
\end{small}
Taking the logarithm on both side of Inequality~\eqref{eq:Apx11} and meanwhile considering $w_{1}^{(p,k)}=1/N$, we could obtain
{
\begin{small}
\begin{equation}
\label{eq:Apx12}
        \sum_{t=1}^{T}
        \sum_{q=1}^{N} 
        \bar{w}_{t}^{(p,q)} 
        \mathcal{L}_{t} 
        \left (
          h_{\bm{\theta}_{t}}^{(p,q)}
        \right )
        -
        \sum_{t=1}^{T}
        \mathcal{L}_t
        \left (
          h_{\bm{\theta}_{t}}^{(p,k)}
        \right )
        \leq
        \eta T + \frac{\log N}{\eta},
\end{equation}
\end{small}}
where we note that $\mathcal{L}_{t} ( h_{\bm{\theta}_{t}}^{(p,q)} ) \in [0,1]$ according to \textbf{AS2}, and Lemma~\ref{Lem2} has been proved, where we could divide $T$ on both side of the Inequality~\eqref{eq:Apx12}. 
\end{proof}

Lemma~\ref{Lem2} shows that the performance gap between an ensemble of collaborative predictors with the trainable weight $\bar{w}_{t}^{(p,q)}$ and an arbitrary collaborative predictor fed with any collaborative pairs $(v^{(p)}, v^{(k)})$, including the best predictor, can be upper bounded.
Combining Lemma~\ref{Lem1} and Lemma~\ref{Lem2}, we derive the upper bound of the regret as in Theorem~\ref{Them1}.
\begin{myThm}
\label{Them1}
Under {\bf AS1} and {\bf AS2}, and with the analysis of Lemma~\ref{Lem1} and Lemma~\ref{Lem2}, the regret of the proposed CoPU satisfies the following bound.
\begin{equation*}
\begin{aligned}
      & \frac{1}{T}
      \left (
      \sum_{t=1}^T
      \mathcal{L}_t
        \left (
          \sum_{q=1}^N
            \bar{w}_{t}^{(p,q)}h_{\bm{\theta}_{t}}^{(p,q)}
        \right )
    -
    \sum_{t=1}^T
      \mathcal{L}_t
        \left (
          h_{\bm{\theta}^*}^{(p,q^*)}
        \right ) 
      \right ) \\
    & \leq
    \frac{\log N}{\eta T} + 
    \frac{C_{\bm{\theta}}^2}{2 \eta T} + 
    \frac{\eta L^2}{2} + 
    \eta.
\end{aligned}
\end{equation*}
Setting $\eta = \mathcal{O} (1/\sqrt{T})$, the static regret converges with ${\rm R}_{T} = \mathcal{O} (1/\sqrt{T})$.  
\end{myThm}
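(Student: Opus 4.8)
The plan is to combine Lemma~\ref{Lem1} and Lemma~\ref{Lem2} through a single telescoping decomposition, after first reducing the loss of the aggregated prediction to a weighted average of the individual losses by convexity. First I would apply Jensen's inequality: since $\mathcal{L}_t(\cdot)$ is convex (by \textbf{AS1}) and the normalized weights $\{\bar{w}_{t}^{(p,q)}\}_{q=1}^{N}$ form a probability distribution, $\sum_{q=1}^{N}\bar{w}_{t}^{(p,q)}=1$, we have for every $t$
\[
\mathcal{L}_t\Big(\sum_{q=1}^{N}\bar{w}_{t}^{(p,q)}h_{\bm{\theta}_{t}}^{(p,q)}\Big)\;\le\;\sum_{q=1}^{N}\bar{w}_{t}^{(p,q)}\,\mathcal{L}_t\big(h_{\bm{\theta}_{t}}^{(p,q)}\big),
\]
which is exactly the passage from Eq.~\eqref{eq:original_loss} to the surrogate Eq.~\eqref{eq:practical_loss}. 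Summing over $t$ and dividing by $T$ therefore upper bounds the accumulated loss of CoPU appearing in $\mathrm{R}_T$ by the accumulated weighted loss of the collaborative predictors.

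Next I would insert, at each step $t$, the loss $\mathcal{L}_t(h_{\bm{\theta}_{t}}^{(p,q^*)})$ of the online-updated predictor attached to the hindsight-optimal collaborative pair $(v^{(p)},v^{(q^*)})$, and split the bound as
\[
\mathrm{R}_T\;\le\;\underbrace{\frac{1}{T}\sum_{t=1}^{T}\Big(\sum_{q=1}^{N}\bar{w}_{t}^{(p,q)}\mathcal{L}_t(h_{\bm{\theta}_{t}}^{(p,q)})-\mathcal{L}_t(h_{\bm{\theta}_{t}}^{(p,q^*)})\Big)}_{(\mathrm{I})}+\underbrace{\frac{1}{T}\sum_{t=1}^{T}\Big(\mathcal{L}_t(h_{\bm{\theta}_{t}}^{(p,q^*)})-\mathcal{L}_t(h_{\bm{\theta}^*}^{(p,q^*)})\Big)}_{(\mathrm{II})}.
\]
Term $(\mathrm{I})$ is the gap between the weighted ensemble and an arbitrary collaborative predictor, so Lemma~\ref{Lem2} with $k=q^*$ bounds it by $\eta+\frac{\log N}{\eta T}$; term $(\mathrm{II})$ is the gap between the online predictor on pair $q^*$ and the best predictor in hindsight, so Lemma~\ref{Lem1} with $k=q^*$ bounds it by $\frac{C_{\bm{\theta}}^{2}}{2\eta T}+\frac{\eta L^{2}}{2}$. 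Adding the two pieces yields $\mathrm{R}_T\le\frac{\log N}{\eta T}+\frac{C_{\bm{\theta}}^{2}}{2\eta T}+\frac{\eta L^{2}}{2}+\eta$, which is the claimed bound. Finally, choosing $\eta=c/\sqrt{T}$ makes the first two terms $\mathcal{O}(1/(\eta T))=\mathcal{O}(1/\sqrt{T})$ and the last two $\mathcal{O}(\eta)=\mathcal{O}(1/\sqrt{T})$, so $\mathrm{R}_T=\mathcal{O}(1/\sqrt{T})$.

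The main thing to be careful about — rather than a genuine obstacle — is that the two lemmas must be ``glued'' at a common quantity: Lemma~\ref{Lem1} compares an \emph{arbitrary} online predictor $h_{\bm{\theta}_{t}}^{(p,k)}$ to $h_{\bm{\theta}^*}^{(p,q^*)}$, while Lemma~\ref{Lem2} compares the weighted ensemble to an \emph{arbitrary} $h_{\bm{\theta}_{t}}^{(p,k)}$; the decomposition closes only because both statements hold for every index $k$, so we may fix $k=q^*$ in both and let the intermediate term $\mathcal{L}_t(h_{\bm{\theta}_{t}}^{(p,q^*)})$ cancel. One also must check that the convexity step is legitimate, i.e. that the weights sum to one (true by the normalization $\bar{w}_{t}^{(p,q)}=w_{t}^{(p,q)}/\sum_{q'}w_{t}^{(p,q')}$) and that $\mathcal{L}_t\in[0,1]$, which is already assumed in \textbf{AS2} and used inside both lemmas. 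Beyond this bookkeeping the argument is immediate.
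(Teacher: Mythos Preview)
Your proposal is correct and follows essentially the same route as the paper: apply Jensen's inequality (convexity of $\mathcal{L}_t$ in its argument, together with $\sum_q \bar{w}_t^{(p,q)}=1$) to replace the loss of the aggregate by the weighted average of losses, then telescope through the intermediate term $\mathcal{L}_t(h_{\bm{\theta}_t}^{(p,q^*)})$ and invoke Lemma~\ref{Lem2} and Lemma~\ref{Lem1} with $k=q^*$. The only cosmetic difference is ordering: the paper first adds the two lemma bounds and then applies the convexity step, whereas you apply Jensen first and then split; the resulting inequality and constants are identical.
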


\begin{figure*}[!tb]
    \centering
    \includegraphics[width=1.0\textwidth]{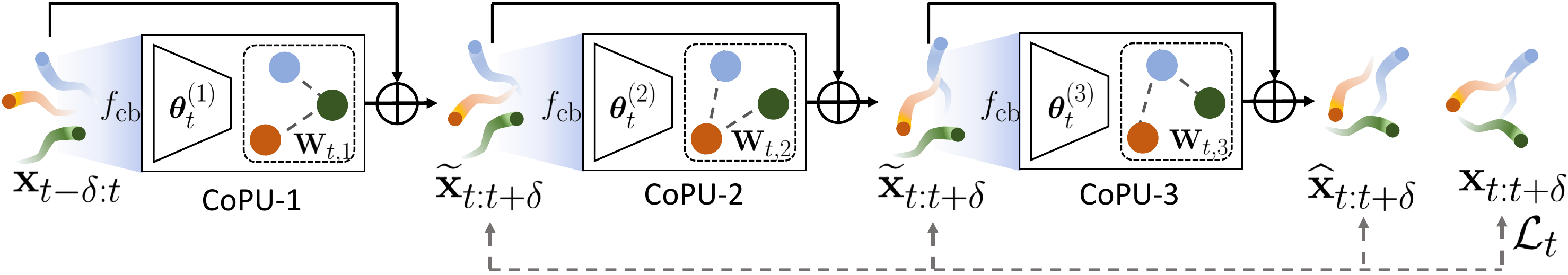}
    \caption{\small The architecture of the collaborative graph neural network (CoGNN). The model takes multiple time-series as inputs and aims to generate the future series. Each graph-temporal prediction unit (CoPU) in the model performs three operations for progressive prediction. The entire model is trained online for real-time prediction. Here we show three CoPUs to sketch the model architecture.}
    \label{fig:pipeline}
\end{figure*}

\begin{proof}
Combining Inequality~\eqref{eq:Apx6} and Inequality~\eqref{eq:Apx11}, we could obtain that
\begin{small}
\begin{equation}
    \begin{aligned}
    \label{eq:Apx13}
    & \frac{1}{T}
      \left (
      \sum_{t=1}^T
      \sum_{q=1}^N \bar{w}_{t}^{(p,q)}
      \mathcal{L}_t
        \left (
          h_{\bm{\theta}_{t}}^{(p,q)}
        \right )
    -
    \sum_{t=1}^T
      \mathcal{L}_t
        \left (
          h_{\bm{\theta}^*}^{(p,q^*)}
        \right ) 
      \right ) \\
    \leq ~ &
    \frac{\log N}{\eta T} + 
    \frac{C_{\bm{\theta}}^2}{2 \eta T} + 
    \frac{\eta L^2}{2} + 
    \eta.
    \end{aligned}
\end{equation}
\end{small}
Note that the loss function $\mathcal{L}_t(h_{\bm{\theta}_t}^{(p,q)})$ is convex w.r.t. any $h_{\bm{\theta}_t}^{(p,q)}$, thus the loss function $\mathcal{L}_t (\sum_{q=1}^N \bar{w}_{t}^{(p,q)}h_{\bm{\theta}_{t}}^{(p,q)} )$ has an upper bound $\sum_{q=1}^N \bar{w}_{t}^{(p,q)} \mathcal{L}_t ( h_{\bm{\theta}_{t}}^{(p,q)} )$; that is,
\begin{small}
\begin{equation}
    \label{eq:Apx14}
    \mathcal{L}_t 
    \left (
      \sum_{q=1}^N \bar{w}_{t}^{(p,q)}h_{\bm{\theta}_{t}}^{(p,q)} 
    \right )
    \leq
    \sum_{q=1}^N 
    \bar{w}_{t}^{(p,q)} 
    \mathcal{L}_t \left( h_{\bm{\theta}_{t}}^{(p,q)} \right)
\end{equation}
\end{small}
understanding that Inequality~\eqref{eq:Apx12} satisfies all the collaborative pairs in the online system, we could easily prove the Theorem~\ref{Them1} to represent the upper bound of the static regret. Plugging~\eqref{eq:Apx14} in~\eqref{eq:Apx13}, we could easily prove the Theorem 1.
\end{proof}

Therefore, the regret of CoPU is upper bounded during online training. Additionally, when $T$ becomes large, the regret converges to a very small value, reflecting the finite gap between the CoPU and the best single collaborative predictor. We also validate Theorem~\ref{Them1} by computing the regret along time in our experiment; see Fig.~\ref{fig:RegretPlot} (a).

\section{Collaborative Graph Neural Networks}
\label{sec:CoGNN}
Though one collaborative prediction unit (CoPU) can tackle the prediction task, in the experiments, we find that stacking multiple CoPUs can lead to even better performance. Therefore, we develop a deep architecture called \emph{collaborative graph neural networks} (CoGNN), which employs multiple CoPUs as computational building blocks.

\subsection{Network architecture}

We sketch the architecture of the proposed CoGNN in Fig.~\ref{fig:pipeline}, where we stack three CoPUs in cascade for example. Each CoPU in CoGNN progressively fine-tunes the predictions and approximates the target measurements step-by-step. According to the examplar Fig.~\ref{fig:pipeline}, the first and the second CoPUs produce the immediate prediction results $\widetilde{\x}_{t:t+\delta}$ for all the agents from the input clips or the first CoPU; the last CoPU generates the final prediction $\widehat{\x}_{t:t+\delta}$ from the results of the second CoPU.
For each CoPU, we apply a residual connection between the input and output, reflecting that the internal modules estimate the displacements from the input data to the target.

To train the network, we apply the same form of loss function on each CoPU, because each CoPU tackles a complete prediction task and could be trained directly; see the dash lines in gray in Fig.~\ref{fig:pipeline}. Moreover, the collaborative edge weights in each CoPU could be learned to directly optimize an explicit objective function, Eq.~\eqref{eq:w_update}, through the exponentiated update.

\begin{figure*}[!t]
  \begin{center}
    \begin{tabular}{cccc}
     \includegraphics[width=0.3\textwidth]{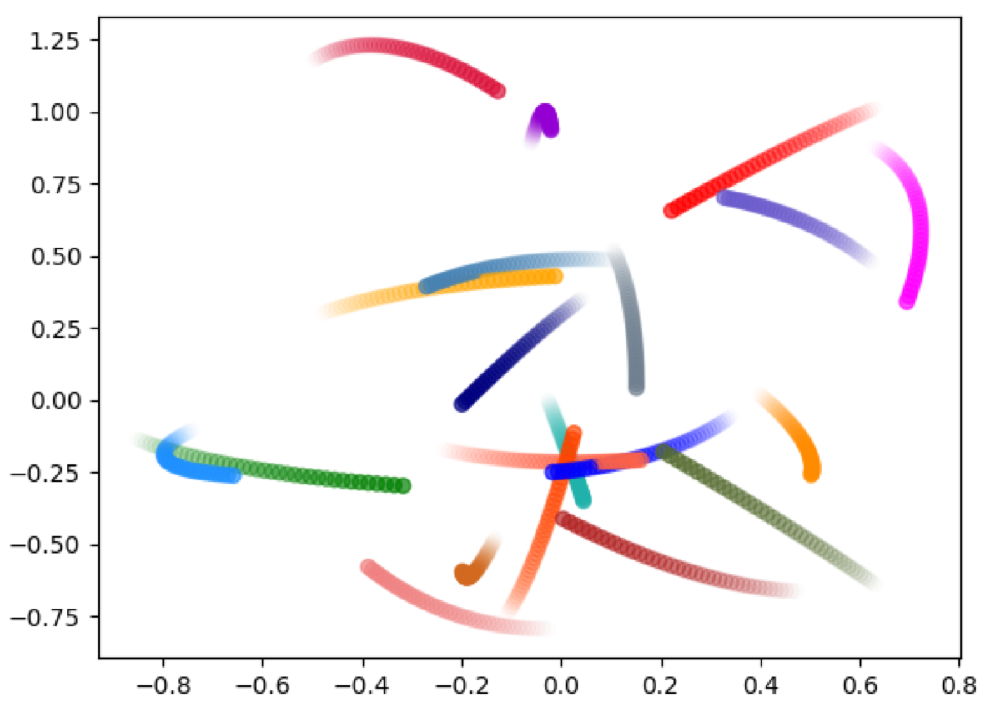}   
   & \includegraphics[width=0.3\textwidth]{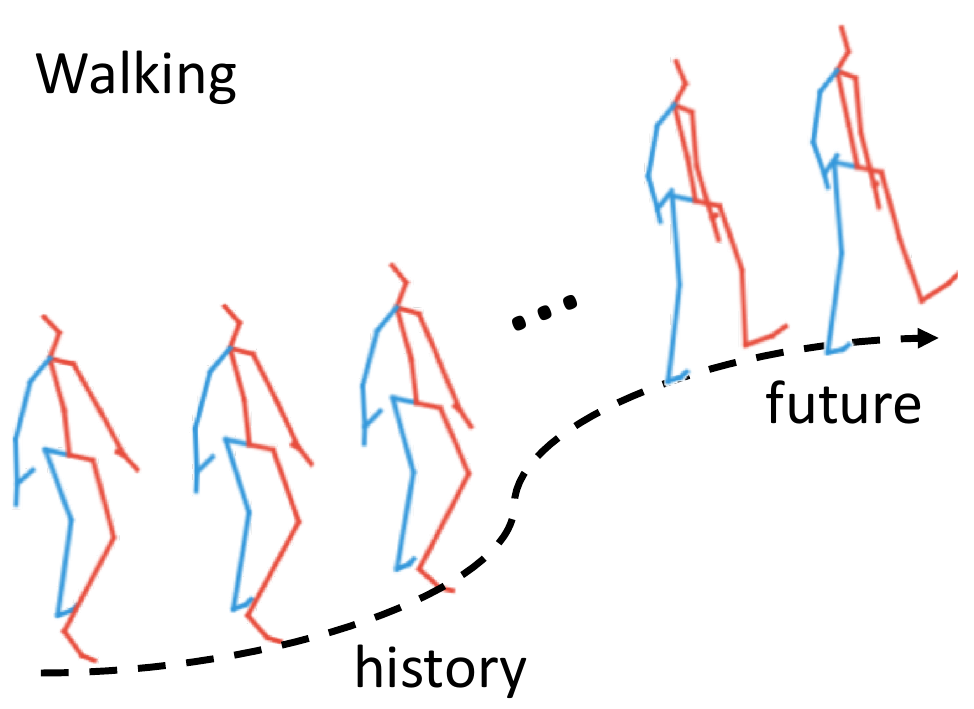}
   & \includegraphics[width=0.3\textwidth]{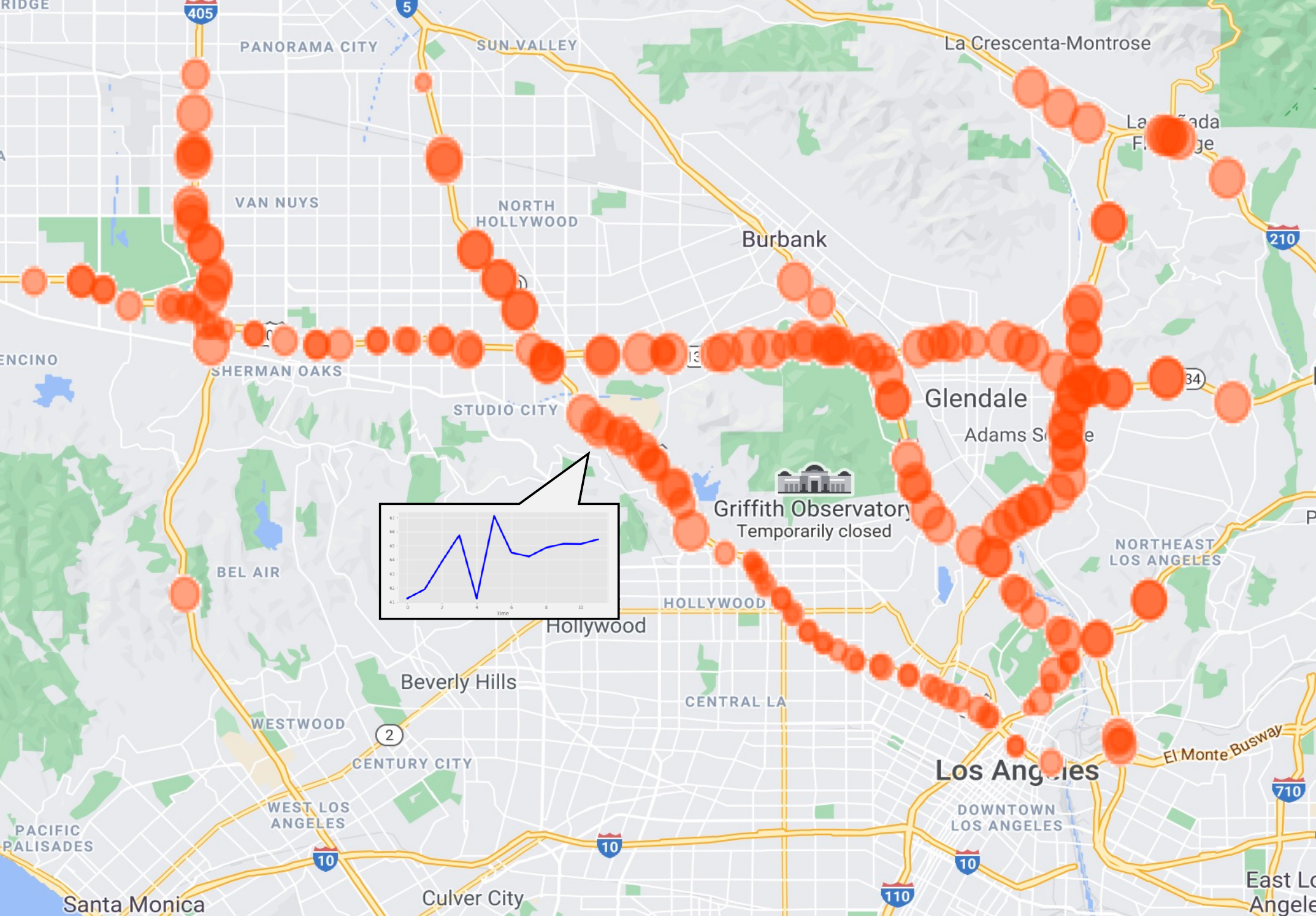}
    \\
    {\small (a) Simulated interaction system.} &  
    {\small (b) Human motion.} &
    {\small (c) Traffic speed distribution.} 
  \end{tabular}
\end{center}
\caption{\small Data examples of three scenarios of online multi-agent forecasting. (a) Simulated interaction systems: we generate systems based on 20 moving agents with various interactions, and we forecast the trajectory of each agent given the historical movements in an online setting. (b) Human motion: the human motion sequences consist of 3D skeleton-based human poses performing a certain action, e.g., walking, and we generate the future poses given the historical poses online. (c) Traffic speed distribution in cities: at different locations, there are sensors recording the traffic speed in real-time; that is, the traffic speed at each point is a continuous time-series (see the curve), while the larger sizes of the red circles denote the higher traffic speed at a certain time step. We aim to predict the traffic speed given the historical speed online.}
\label{fig:DataExample}
\end{figure*}

\subsection{Comparison with previous works}
Compared to previous graph-based prediction networks~\cite{Li_2020_CVPR_DMG, Shi_2019_CVPR_TSA, Yu_2018_IJCAI_STG,Kipf_2018_ICML_NRI, Graber_2020_CVPR, NEURIPS2020_e4d8163c}, our model contributes from two aspects. 

First,  to our best knowledge, the proposed CoGNN is the first online-learning method, which is learned and tested along with the streaming data and never stops. This is crucial because an online method could capture the highly dynamic patterns of the real-time series and reduce the training/testing gaps between the training and testing samples in practice. Our CoGNN is scalable and easy to train, which is suitable for online learning. However, previous methods only sample the multi-agent systems as sets of independent clips but neglect the real-time dynamics and complexity.

Second, the proposed CoGNN is more interpretable. The output of each layer in the proposed CoGNN predicts agents' future statuses in the original measurement space, reflecting a clear physical meaning. With a cascaded structure, our CoGNN can progressively fine-tune the time-series prediction, achieving effective and stable performance. However, previous methods use hierarchical graph propagation to extract hidden representations without interpretable gradual error reduction. 

\section{Experiments}
\label{sec:experiment}
To evaluate the proposed CoGNNs, we conduct extensive experiments on three important multi-agent forecasting tasks in an online setting; that is, online simulated trajectory prediction, online human motion prediction and online traffic speed prediction, where the corresponding scenarios and data examples are sketched in Fig.~\ref{fig:DataExample}.

In CoGNNs, we respectively use three forms of collaborative predictors $h_{\bm{\theta}_t}(\cdot)$ in CoPUs; see section~\ref{sec:model_design}. We note the three forms of models as `CoGNN-AR', `CoGNN-LSTM' and `CoGNN-TC', respectively.
In CoGNN-AR, to compute the state, ${\bf x}_{t-\delta:t}^{(p,q)}$, of any collaborative pair $(v^{(p)},v^{(q)})$, the order of measurement difference $D=10$; for CoGNN-LSTM and CoGNN-AR, the hidden dimension of $h_{\bm{\theta}_t}(\cdot)$ is $64$.
The entire CoGNNs are built with $2$ CoPUs, which enable precise and stable forecasting.
We train the models on one GTX-1080Ti GPU with PyTorch 1.4 framework. The learning rates are from $0.01$ to $0.075$ for different forms of CoGNNs. We also apply gradient clipping to constrain the absolute values of gradients within $10$ for robustness.

\subsection{Online simulated trajectory prediction}
We first evaluate the proposed CoGNNs in online simulated particle systems, which contains several moving particles whose positions and velocities are determined by their interactions. 

\mypar{Dataset}
In our simulated particle systems, there are $20$ particle agents, whose initial states (positions and velocities) are set randomly, moving in a 2D space with dynamic mutual interaction. The interactions among the agents could be represent as a graph structure with symmetric adjacency matrix. Here we use a simple interaction rule called 'spring' to construct the complex systems; that is, each agent has attractions with several other agents that are randomly selected. One example of our simulated system is illustrated in Fig.~\ref{fig:DataExample} (a). This system was similarly used by~\cite{Kipf_2018_ICML_NRI}, where only $5$ agents are set in a simple system. However, different from the previous work~\cite{Kipf_2018_ICML_NRI} which generates a set of independent trajectory sequences with fixed and known interactions, in our simulated systems, agents have randomly changed interactions with different agents to reflect dynamic and changing states online. In each interactive simulation, we change the interaction structures for $20$ times randomly; that is, the interaction adjacency matrix is randomly adjusted for $20$ times. Moreover, we conduct $10$ times of different simulation to demonstrate the effectiveness and robustness. In our experiments, along the streaming data, our CoGNNs are fed with each 10-frame clip one-by-one along time and generate the next 10-frame clip for prediction.

\mypar{Baselines and evaluation metrics}
We compare our CoGNNs with several methods on trajectory prediction, including  NRI~\cite{Kipf_2018_ICML_NRI}, DNRI~\cite{Graber_2020_CVPR} and EvolveGraph~\cite{NEURIPS2020_e4d8163c}. These methods are designed for the offline scenarios; thus we modify these works to be run online with the same inputs and outputs as our CoGNNs. We also adjust their hyper-parameters to be reasonable.
Besides being compared to the state-of-the-art works, CoGNNs are also compared to two degenerated variants: 1) CoGNNs forecasting without collaborative graphs, i.e., `CoGNNs (no ${\bf W}_t$)'; and 2) CoGNNs training collaborative graphs via online gradient descent end-to-end in black-boxes, i.e., `CoGNNs (e2e ${\bf W}_t$)'.
To evaluate various methods, we employ the average mean square errors (MSE) between the predicted locations and ground-truth of all the agents over the training/testing process.

\begin{table*}[t]
    \centering
    \footnotesize
    \caption{\small Prediction results (mean values and standard deviations of MSEs) on the tasks of online simulated prediction, where we consider 10 interaction systems with 20 trajectories. We also compare the variants of CoGNNs: CoGNNs (no ${\bf W}_t$), which predict each agent based on this agent itself without using the collaborative graph; and CoGNNs (e2e ${\bf W}_t$), which train the collaborative graph through standard gradient descent in an end-to-end setting.}
    \renewcommand{\arraystretch}{1.0}
    \resizebox{0.74\textwidth}{!}{
    \begin{tabular}{c|c|ccccc}
        \hline
        \hline
        \multicolumn{2}{c|}{~} & \multicolumn{5}{c}{Prediction steps}\\
        \hline
        \multicolumn{2}{c|}{Methods} & 1 & 2 & 5 & 8 & 10 \\
        \hline
        \multirow{4}{*}{\rotatebox{90}{Baselines}} & ZeroV & 13.43 $\pm$ 1.94 & 
                                                             13.43 $\pm$ 1.95 &
                                                             13.48 $\pm$ 2.08 & 
                                                             13.58 $\pm$ 2.35 & 
                                                             13.69 $\pm$ 2.64 \\
        ~ & NRI~\cite{Kipf_2018_ICML_NRI} & 0.56 $\pm$ 0.16 &
                                            0.58 $\pm$ 0.15 &
                                            0.60 $\pm$ 0.18 &
                                            1.03 $\pm$ 0.30 &
                                            1.59 $\pm$ 0.46 \\
        ~ & DNRI~\cite{Graber_2020_CVPR} & 0.26 $\pm$ 0.10 &
                                           0.28 $\pm$ 0.12 &
                                           0.31 $\pm$ 0.12 &
                                           0.51 $\pm$ 0.15 &
                                           0.77 $\pm$ 0.19 \\
        ~ & EvolveGraph~\cite{NEURIPS2020_e4d8163c} & 0.22 $\pm$ 0.10 &
                                                      0.24 $\pm$ 0.09 &
                                                      0.28 $\pm$ 0.10 &
                                                      0.44 $\pm$ 0.13 &
                                                      0.68 $\pm$ 0.15 \\
        \hline
        \multirow{3}{*}{\rotatebox{90}{no ${\bf W}_t$}} 
          & CoGNN-AR (no ${\bf W}_t$) & 0.23 $\pm$ 0.06 &
                                        0.24 $\pm$ 0.07 & 
                                        0.41 $\pm$ 0.12 & 
                                        0.75 $\pm$ 0.23 &
                                        1.09 $\pm$ 0.32 \\
        ~ & CoGNN-LSTM (no ${\bf W}_t$) & 0.16 $\pm$ 0.04 &
                                          0.16 $\pm$ 0.05 & 
                                          0.20 $\pm$ 0.05 & 
                                          0.40 $\pm$ 0.14 &
                                          0.61 $\pm$ 0.16 \\
        ~ & CoGNN-TC (no ${\bf W}_t$) & 0.19 $\pm$ 0.08 &
                                        0.23 $\pm$ 0.11 & 
                                        0.24 $\pm$ 0.11 & 
                                        0.42 $\pm$ 0.16 &
                                        0.63 $\pm$ 0.18 \\
        \hline
        \multirow{3}{*}{\rotatebox{90}{e2e ${\bf W}_t$}} 
          & CoGNN-AR (e2e ${\bf W}_t$) & 0.20 $\pm$ 0.04 &
                                         0.21 $\pm$ 0.06 & 
                                         0.37 $\pm$ 0.15 & 
                                         0.74 $\pm$ 0.22 &
                                         1.06 $\pm$ 0.34 \\
        ~ & CoGNN-LSTM (e2e ${\bf W}_t$) & 0.17 $\pm$ 0.06 &
                                           0.16 $\pm$ 0.06 & 
                                           0.19 $\pm$ 0.07 & 
                                           0.37 $\pm$ 0.13 &
                                           0.60 $\pm$ 0.18 \\
        ~ & CoGNN-TC (e2e ${\bf W}_t$) & {\bf 0.15} $\pm$ 0.05 &
                                         0.18 $\pm$ 0.07 & 
                                         0.20 $\pm$ 0.07 & 
                                         0.36 $\pm$ 0.12 &
                                         0.58 $\pm$ 0.17 \\
        \hline
        \multirow{3}{*}{\rotatebox{90}{Ours}} 
          & CoGNN-AR & 0.19 $\pm$ 0.06 &
                       0.20 $\pm$ 0.06 & 
                       0.37 $\pm$ 0.12 & 
                       0.70 $\pm$ 0.21 &
                       1.01 $\pm$ 0.30  \\
        ~ & CoGNN-LSTM & {\bf 0.15} $\pm$ 0.05 &
                         {\bf 0.15} $\pm$ 0.06 & 
                         0.19 $\pm$ 0.05 & 
                         {\bf 0.35} $\pm$ 0.10 &
                         0.59 $\pm$ 0.17  \\
        ~ & CoGNN-TC & 0.16 $\pm$ 0.06 &
                       0.18 $\pm$ 0.05 & 
                       {\bf 0.18} $\pm$ 0.06 & 
                       {\bf 0.35} $\pm$ 0.12 &
                       {\bf 0.56} $\pm$ 0.16  \\
        \hline
        \hline
    \end{tabular}}
    \label{tab:result_on_simulation}
\end{table*}

\mypar{Results}
For various methods, Table~\ref{tab:result_on_simulation} presents the average forecasting results over the online learning processes, where we show the prediction performance at $5$ prediction steps of the generated 10-frame clips, respectively. Since all the models are run on $10$ different interaction systems,  the mean values and standard deviations of the prediction MSEs are presented for comparison. 
We see that, 
1) compared to the state-of-the-art methods, our CoGNNs achieve more precise forecasting and outperform the baselines, especially CoGNN-LSTM and CoGNN-TC achieve much more effective prediction at various prediction steps; 
2) compared to the model variants, `no ${\bf W}_t$' and `e2e ${\bf W}_t$', the proposed CoGNNs consistently outperforms these two variants, demonstrating the importance and effectiveness of our intepretable collaborative graphs and the exponentiated update strategy.

\subsection{Online human motion prediction}
We next conduct experiments on online human motion prediction to verify the effectiveness of our CoGNN.

\mypar{Dataset} Various models are trained and tested on two motion capture datasets: Human3.6M (H3.6M)~\cite{Ionescu_2014_H36m} and CMU Mocap\footnote{http://mocap.cs.cmu.edu/}.
H3.6M contains $15$ classes of activities. There are $32$ body joints with their 3D coordinates.
An examplar motion clip in H3.6M is shown in Fig.~\ref{fig:DataExample} (b).
CMU Mocap contains $8$ activities with $38$ joints.
Following the previous settings~\cite{Martinez_2017_CVPR_OHM, Li_2018_CVPR_CSS, Mao_2019_ICCV_LTD}, we downsample all motion sequences along time by two for both datasets.
We take the motion clips of the past $400$ms ($10$ frames) as input, and forecast the future $400$ms.
\begin{table*}[tb]
    \centering
    \footnotesize
    \caption{\small Average MPJPEs of CoGNNs and baselines for motion prediction on 4 representative actions of H3.6M. We also compare two types of variants of CoGNNs, i.e., `no ${\bf W}_t$' and `e2e ${\bf W}_t$'; see the definition of these two variants in the caption of Table~\ref{tab:result_on_simulation}.}
    \resizebox{1.0\textwidth}{!}{
    \begin{tabular}{c|c|cccc|cccc|cccc|cccc}
        \hline
        \hline
        \multicolumn{2}{c|}{Motions} & \multicolumn{4}{|c|}{Walking} & \multicolumn{4}{|c|}{Eating} & \multicolumn{4}{|c|}{Smoking} & \multicolumn{4}{|c}{Discussion}\\
        \hline
        \multicolumn{2}{c|}{Methods} & 80&160&320&400 & 80&160&320&400 & 80&160&320&400 & 80&160&320&400 \\
        \hline
        \multirow{8}{*}{\rotatebox{90}{Baselines}}
        & Res-sup~\cite{Martinez_2017_CVPR_OHM} & 23.0 & 32.1 & 37.8 & 44.4 & 20.6 & 30.2 & 37.1 & 46.0 & 23.2 & 27.6 & 35.8 & 42.7 & 30.1 & 41.8 & 46.8 & 52.0 \\ 
        ~ & NRI~\cite{Kipf_2018_ICML_NRI} & 21.6 & 30.4 & 36.0 & 42.4 & 19.5 & 28.7 & 36.4 & 42.5 & 22.4 & 26.8 & 33.2 & 40.1 & 28.7 & 39.3 & 44.9 & 51.3\\
        ~ & CSM~\cite{Li_2018_CVPR_CSS} & 22.1 & 31.5 & 39.4 & 43.3 & 19.9 & 29.1 & 39.7 & 43.9 & 21.3 & 26.5 & 32.4 & 38.6 & 31.2 & 38.0 & 44.4 & 50.8 \\
        ~ & Traj-GCN~\cite{Mao_2019_ICCV_LTD} & 22.9 & 30.1 & 38.9 & 43.6 & 19.0 & 24.9 & 35.8 & 38.6 & 19.7 & 25.3 & 28.2 & 35.9 & 24.1 & 36.2 & 43.4 & 47.3 \\
        ~ & DNRI~\cite{Graber_2020_CVPR} & 22.3 & 31.8 & 39.2 & 43.7 & 19.3 & 27.6 & 36.7 & 40.5 & 20.5 & 26.3 & 29.1 & 36.7 & 26.8 & 37.2 & 43.8 & 49.0 \\
        ~ & DMGNN~\cite{Li_2020_CVPR_DMG} & 22.6 & 31.1 & 39.2 & 44.1 & 18.2 & 24.2 & 32.5 & 39.2 & 19.9 & 24.3 & 27.8 & 35.7 & 24.6 & 35.8 & 43.4 & 49.5 \\
        ~ & HisRep~\cite{Mao_2020_ECCV_HRI} & 22.6 & 29.0 & 37.3 & 42.1 & 17.3 & 22.5 & 33.6 & 37.3 & 19.2 & 25.4 & 29.9 & 36.3 & 22.8 & 33.6 & 40.7 & 46.5 \\
        ~ & EvolveGraph~\cite{NEURIPS2020_e4d8163c} & 22.2 & 30.7 & 38.3 & 43.5 & 18.1 & 23.8 & 34.3 & 38.9 & 20.4 & 26.0 & 29.7 & 37.1 & 24.2 & 34.7 & 42.9 & 48.4\\
        \hline
        \multirow{3}{*}{\rotatebox{90}{no ${\bf W}_t$}}
        & CoGNN-AR (no ${\bf W}_t$) & 26.2 & 39.9 & 62.3 & 65.4 & 20.3 & 27.8 & 43.0 & 48.7 & 23.2 & 30.5 & 46.0 & 52.7 & 24.8 & 35.6 & 55.7 & 63.7 \\
        ~ & CoGNN-LSTM (no ${\bf W}_t$) & 23.3 & 23.9 & {\bf 29.8} & 40.1 & 19.8 & 20.0 & 26.4 & 35.0 & 20.3 & 20.8 & 28.0 & 33.9 & 29.5 & 30.0 & 40.6 & 48.9 \\
        ~ & CoGNN-TC (no ${\bf W}_t$) & 25.8 & 27.3 & 37.2 & 47.2 & 19.6 & 20.1 & 30.1 & 38.3 & 17.8 & 19.1 & 29.7 & 36.3 & 22.2 & 25.3 & 36.7 & 48.3 \\
        \hline
        \multirow{3}{*}{\rotatebox{90}{e2e ${\bf W}_t$}}
        & CoGNN-AR (e2e ${\bf W}_t$) & 24.9 & 35.1 & 50.5 & 57.9 & 17.6 & 26.8 & 38.7 & 40.3 & 22.3 & 28.2 & 43.6 & 48.7 & 24.9 & 34.3 & 51.7 & 58.6 \\
        ~ & CoGNN-LSTM (e2e ${\bf W}_t$) & 23.5 & 23.7 & 33.7 & 40.9 & 18.4 & 17.0 & 27.6 & 34.1 & 19.4 & 20.0 & 27.3 & 24.2 & 28.4 & 29.6 & 40.1 & 47.5 \\
        ~ & CoGNN-TC (e2e ${\bf W}_t$) & 22.6 & 25.2 & 35.3 & 41.7 & 18.3 & 19.9 & 29.2 & 35.7 & 18.0 & 18.8 & 28.4 & 35.5 & 22.0 & 26.7 & 36.1 & 47.5 \\
        \hline
        \multirow{3}{*}{\rotatebox{90}{Ours}}
        & CoGNN-AR & 22.3 & 33.7 & 51.5 & 56.8 & {\bf 16.6} & 23.0 & 36.5 & 41.5 & 19.5 & 27.7 & 41.6 & 45.4 & 24.1 & 30.1 & 50.4 & 58.0 \\
        ~ & CoGNN-LSTM & 22.6 & 23.3 & 33.6 & 40.7 & 18.2 & 19.2 & {\bf 26.1} & {\bf 33.7} & {19.7} & 19.4 & {\bf 25.2} & 33.8 & 28.7 & 29.3 & 38.9 & 47.3 \\
        ~ & CoGNN-TC & {\bf 21.5} & {\bf 21.4} & {31.0} & {\bf 39.7} & {16.8} & {\bf 16.2} & {28.7} & {36.0} & {\bf 16.4} & {\bf 16.6} & {26.6} & {\bf 33.6} & {\bf 19.9} & {\bf 18.5} & {\bf 34.1} & {\bf 45.2} \\
        \hline
        \hline
    \end{tabular}
    }
    \label{tab:result_on_H36M}
\end{table*}

\begin{table}[tb]
    \centering
    \footnotesize
    \caption{\small Average MPJPEs of CoGNNs and baselines for motion prediction on CMU Mocap. We also compare two types of variants of CoGNNs, i.e., `no ${\bf W}_t$' and `e2e ${\bf W}_t$'; see the definition of these two variants in the caption of Table~\ref{tab:result_on_simulation}}
    \renewcommand{\arraystretch}{1.0}
    \resizebox{1.0\columnwidth}{!}{
    \begin{tabular}{c|c|cccc}
        \hline
        \hline
        \multicolumn{2}{c|}{Motions} & \multicolumn{4}{|c}{Average} \\
        \hline
        \multicolumn{2}{c|}{Methods}  & 80&160&320&400  \\
        \hline
        \multirow{8}{*}{\rotatebox{90}{Baselines}}
        & Res-sup~\cite{Martinez_2017_CVPR_OHM} & 22.9 & 30.7 & 40.4 & 47.4  \\ 
        ~ & CSM~\cite{Li_2018_CVPR_CSS} & 21.0 & 28.5 & 35.6 & 41.7 \\
        ~ & NRI~\cite{Kipf_2018_ICML_NRI} & 19.7 & 26.7 & 33.6 & 41.1 \\
        ~ & Traj-GCN~\cite{Mao_2019_ICCV_LTD} & 17.3 & 24.5 & 31.8 & 37.9 \\
        ~ & DNRI~\cite{Graber_2020_CVPR} & 18.3 & 25.5 & 32.4 & 38.9 \\
        ~ & DMGNN~\cite{Li_2020_CVPR_DMG} & 16.8 & 23.8 & 30.6 & 37.7 \\
        ~ & HisRep~\cite{Mao_2020_ECCV_HRI} & 15.6 & 21.9 & 28.7 & 35.2 \\
        ~ & EvolveGraph~\cite{NEURIPS2020_e4d8163c} & 17.0 & 24.1 & 33.4 & 38.6\\
        \hline
        \multirow{3}{*}{\rotatebox{90}{no ${\bf W}_t$}}
        & CoGNN-AR (no ${\bf W}_t$) & 27.1 & 34.7 & 52.7 & 57.0 \\
        ~ & CoGNN-LSTM (no ${\bf W}_t$) & 18.6 & 21.0 & 30.0 & 37.2 \\
        ~ & CoGNN-TC (no ${\bf W}_t$) & 17.0 & 19.1 & 28.4 & 34.7 \\
        \hline
        \multirow{3}{*}{\rotatebox{90}{e2e ${\bf W}_t$}}
        & CoGNN-AR (e2e ${\bf W}_t$) & 23.1 & 30.6 & 45.4 & 51.2 \\
        ~ & CoGNN-LSTM (e2e ${\bf W}_t$) & 16.9 & 18.7 & 24.6 & 30.2 \\
        ~ & CoGNN-TC (e2e ${\bf W}_t$) & 16.4 & 18.2 & 25.4 & 31.4 \\
        \hline
        \multirow{3}{*}{\rotatebox{90}{Ours}}
         & CoGNN-AR & 21.6 & 28.5 & 43.5 & 50.3 \\
        ~ & CoGNN-LSTM & 16.7 & 17.7 & {\bf 23.3} & 29.0 \\
        ~ & CoGNN-TC & {\bf 15.4} & {\bf 17.2} & 23.8 & {\bf 28.5} \\
        \hline
        \hline
    \end{tabular}
    }
    \label{tab:result_on_CMU}
\end{table}

\mypar{Baselines and evaluation metrics}
We compare our model to state-of-the-art works, including two non-graph methods, Res-sup~\cite{Martinez_2017_CVPR_OHM} and CSM~\cite{Li_2018_CVPR_CSS}, and six graph-based methods, NRI~\cite{Kipf_2018_ICML_NRI}, Traj-GCN~\cite{Mao_2019_ICCV_LTD},  DNRI~\cite{Graber_2020_CVPR}, DMGNN~\cite{Li_2020_CVPR_DMG}, HisRep~\cite{Mao_2020_ECCV_HRI} and EvolveGraph~\cite{NEURIPS2020_e4d8163c}.
For these offline methods, we also adjust them to take the same streaming data as our CoGNNs and generate future poses as effectively as possible in an online setting.
For evaluation, we calculate the Mean Per Joint Postion Error (MPJPE) between each predicted pose and the corresponding ground-truth.

\mypar{Results}
We first evaluate various methods on H3.6M.
Besides being compared to the state-of-the-art works, CoGNNs are also compared to two degenerated variants: i.e., `CoGNN (no ${\bf W}_t$)' and `CoGNN (e2e ${\bf W}_t$)'; see introduction in the experiments of online simulated trajectory prediction. 
Table~\ref{tab:result_on_H36M} shows the forecasting MPJPEs on 4 representative actions: ‘Walking’, ‘Eating’, ‘Smoking’ and ‘Discussion’, presenting the average MPJPEs at the future $80$ms, $160$ms, $320$ms and $400$ms.
We see that, 
1) compared to the state-of-the-art methods, our CoGNNs achieve more precise forecasting and outperform the baselines; 
2) compared to the model variants, `no ${\bf W}_t$' and `e2e ${\bf W}_t$', the proposed CoGNNs consistently outperforms these two variants, demonstrating the importance and effectiveness of our intepretable collaborative graphs and the exponentiated update strategy. 
See the forecasting results of another 11 actions in Appendix.

To further qualitatively evaluate our CoGNNs, we visualize the future poses produced by various methods for human motion prediction on H3.6M dataset. We illustrate the motions of a clip of `Walking' within the future 400 ms, which are predicted by DMGNN~\cite{Li_2020_CVPR_DMG}, CoGNN-LSTM and CoGNN-TC; see Fig.~\ref{fig:PredPlot}.
We see that, for the baseline model, DMGNN, there are large errors after the 160th ms of the prediction time, where the left arm on the human body cannot bend flexibly. As for the CoGNN-LSTM and CoGNN-TC, the generated poses are much closet to the ground-truths than the DMGNN model. Moreover, we note that, the illustrated human poses is the predicted samples at one certain iteration step. To evaluate the online prediction system, we should focus more on the continuous performance and the prediction errors during the whole online learning process.

Additionally, We also various methods on another large-scale datatset, CMU Mocap, for online motion prediction. The average prediction results (MPJPE) across all the actions at the future $80$ms $160$ms $320$ms and $400$ms are presented in Table~\ref{tab:result_on_CMU}.
We see that, 
1) the proposed CoGNN models outperform the state-of-the-art methods with a large margin;
2) the proposed versions of CoGNNs achieve consistent better performances than the other two types of model variants to show the effectiveness of our collaborative mechanism.
The detailed forecasting results of each class of activities are presented in Appendix.

\begin{figure}[tb!]
    \centering
    \includegraphics[width=0.98\columnwidth]{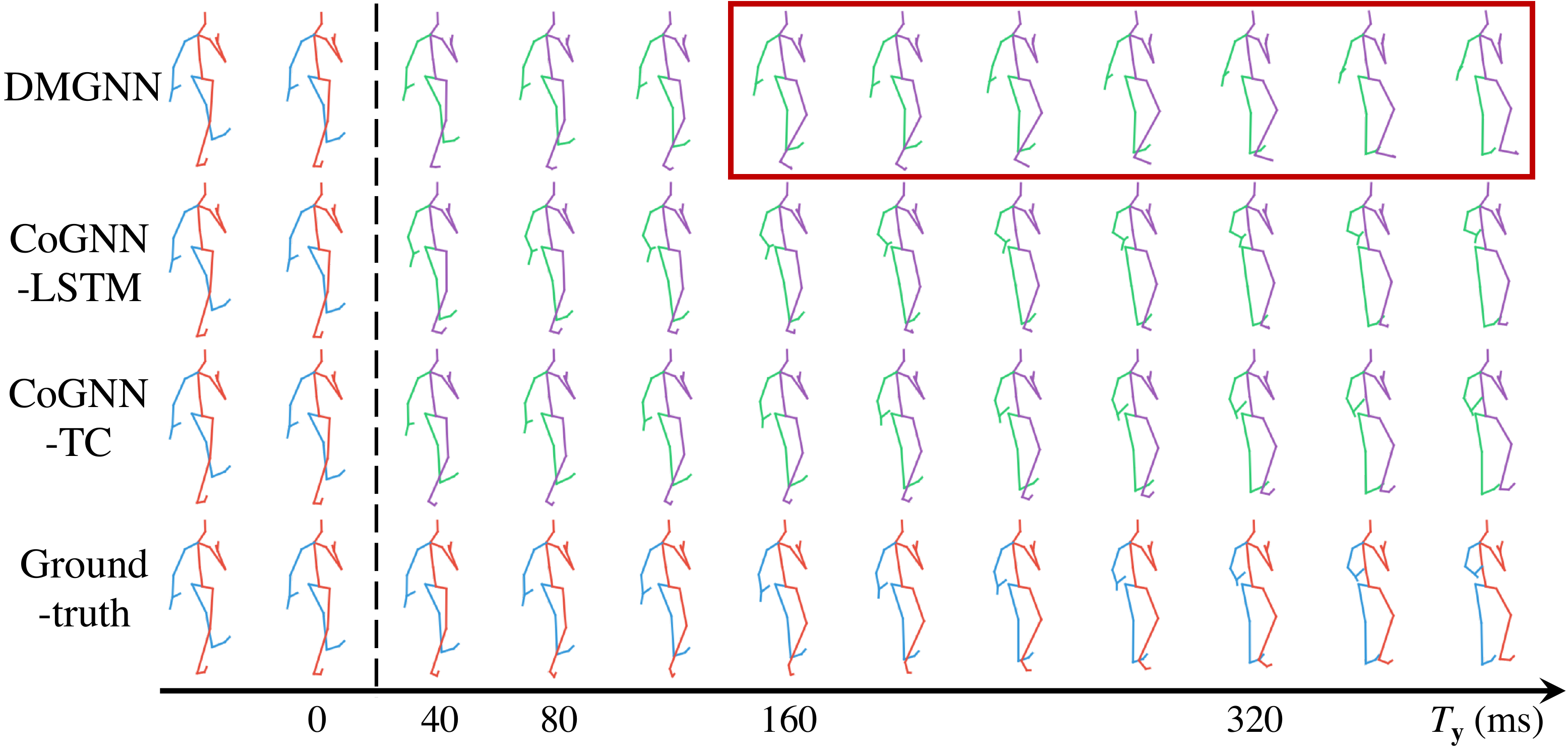}
    \caption{\small Predicted samples for human motion prediction on the action of `Walking' on H3.6M dataset.}
    \label{fig:PredPlot}
\end{figure}

\begin{table*}[tb]
    \centering
    \footnotesize
    \caption{\small Comparisons between our CoGNNs and state-of-the-art methods for online traffic speed forecasting on four datasets based on three metrics. We also present the degenerated variants of CoGNNs which abandon the collaborative graphs, i.e., `no ${\bf W}_t$'.}
    \resizebox{1.0\textwidth}{!}{
    \begin{tabular}{c|c|ccc|ccc|ccc|ccc}
        \hline
        \hline
        \multicolumn{2}{c|}{Datasets} & \multicolumn{3}{|c|}{METR-LA} & \multicolumn{3}{|c}{PeMS-BAY} & \multicolumn{3}{|c}{PeMS-D4} & \multicolumn{3}{|c}{PeMS-D8}\\
        \hline
        \multicolumn{2}{c|}{Methods} & MAE & RMSE & MAPE &  MAE & RMSE & MAPE & MAE & RMSE & MAPE &  MAE & RMSE & MAP\\
        \hline
        \multirow{6}{*}{\rotatebox{90}{Baselines}} & 
         ST-GCN~\cite{Yu_2018_IJCAI_STG} & 5.68 & 8.92 & 13.0\% & 3.09 & 6.57 & 7.7\% & 37.28 & 46.44 & 30.5\% & 30.49 & 39.57 & 24.8\% \\
         ~ & DCRNN~\cite{Li_2018_ICLR_Diffusion} & 5.45 & 8.76 & 12.6\% & 3.07 & 6.69 & 7.8\% & 37.67 & 46.95 & 31.1\% & 31.31 & 39.35 & 25.4\% \\
         ~ & ASTGCN~\cite{Guo_2019_AAAI_ABS} & 4.66 & 7.43 & 11.7\% & 2.51 & 4.29 & 6.0\% & 32.15 & 43.72 & 28.8\% & 28.85 & 36.97 & 23.0\% \\
         ~ & G-WaveNet~\cite{Wu_2019_IJCAI_GWD} & 4.42 & 7.10 & 11.2\% & 2.31 & 4.07 & 5.3\% & 28.58 & 39.59 & 25.6\% & 27.81 & 36.03 & 19.8\% \\
         ~ & STSGCN~\cite{Song_2020_AAAI_STS} & 4.73 & 7.62 & 11.6\% & 2.44 & 4.31 & 6.8\% & 23.82 & 30.85 & 18.9\% & 24.47 & 33.11 & 16.4\% \\
         ~ & AGCRN~\cite{Bai_2020_NIPS_AGC} & 4.19 & 6.40 & 8.3\% & 2.16 & 3.74 & 4.7\% & 19.65 & 28.54 & 16.0\% & 22.72 & 29.03 & 13.4\% \\
         \hline
         \multirow{3}{*}{\rotatebox{90}{no ${\bf W}_t$}} & 
         CoGNN-AR (no ${\bf W}_t$) & 4.42 & 6.61 & 8.8\% & 1.72 & 2.65 & 4.1\% & 19.37 & 28.02 & 17.3\% & 16.92 & 26.61 & 12.7\%\\
         ~ & CoGNN-LSTM (no ${\bf W}_t$) & 6.08 & 8.35 & 14.6\% & 2.74 & 3.68 & 4.4\%& 27.45 & 35.14 & 19.3\% & 20.12 & 29.73 & 13.7\%\\
         ~ & CoGNN-TC (no ${\bf W}_t$) & 3.61 & 5.89 & 8.0\% & 1.43 & 2.41 & 3.0\% & 20.27 & 29.48 & 16.1\% & 16.99 & 24.14 & 11.9\%\\
         \hline
         \multirow{3}{*}{\rotatebox{90}{Ours}} & 
         CoGNN-AR & 4.09 & 6.22 & 8.3\% & 1.59 & 2.42 & 3.8\% & 19.07 & {\bf 26.94} & 15.3\% & 15.75 & 23.74 & 11.4\%\\
         ~ & CoGNN-LSTM & 5.04 & 7.28 & 13.4\% & 2.21 & 3.52 & 4.2\% & 26.95 & 32.31 & 18.9\% & 19.62 & 28.87 & 13.1\%\\
         ~ & CoGNN-TC & {\bf 3.46} & {\bf 5.75} & {\bf 7.8\%} & {\bf 1.36} & {\bf 2.24} & {\bf 2.8\%} & {\bf 18.93} & 27.71 & {\bf 14.7\%} & {\bf 15.51} & {\bf 22.35} & {\bf 10.8\%}\\
        \hline
        \hline
    \end{tabular}
    }
    \label{tab:result_on_traffic}
\end{table*}

\subsection{Online traffic speed prediction}
Furthermore, the proposed CoGNNs are also evaluated on the tasks of online traffic speed prediction.

\mypar{Datasets}
We conduct experiments on four large-scale traffic datasets in the real world, including the traffic data of Los Angeles county~\cite{Jagadish_2014_Commu_BDI} or different regions of the California highway~\cite{Chen_2011_TRR_FPM}:

$\bullet$ METR-LA contains traffic speed collected by 207 sensors around Los Angeles from March to June in 2012.

$\bullet$  PeMS-BAY contains traffic speed at Bay Area collected by 325 sensors on CalTrans Performance Measurement System (PeMS) from Jan. to May in 2017.

$\bullet$ PeMS-D4 contains traffic speed collected by 307 sensors at San Francisco from Jan. to Feb. in 2018.

$\bullet$ PeMS-D8 contains traffic speed collected by 170 sensors in San Bernardino from July to Aug. in 2016.

We illustrate the traffic speed distribution of METR-LA in Fig.~\ref{fig:DataExample} (c) as an example, where we denote that each sensor captures the dynamic traffic speed along time.
For these datasets, we aggregate the data sequence into 5-minute intervals from 30-second frequency.
We take the past $12$-frame-length ($1$ hour) sequence as the input and forecast the next $12$ frames.

\mypar{Baselines and evaluation metrics}
To evaluate the effectiveness our CoGNNs on the online traffic speed prediction, we introduce several state-of-the-art models for comparison,including 
ST-GCN~\cite{Yu_2018_IJCAI_STG}, DCRNN~\cite{Li_2018_ICLR_Diffusion}, ASTGCN~\cite{Guo_2019_AAAI_ABS}, G-WaveNet~\cite{Wu_2019_IJCAI_GWD}, STSGCN~\cite{Song_2020_AAAI_STS} and AGCRN~\cite{Bai_2020_NIPS_AGC}.
To test various methods, we utilize three metrics, which are mean absolute error (MAE), root mean square error (RMSE) and mean absolute percentage error (MAPE) over the whole iterative training and test process.

\mypar{Results}
We compare CoGNNs to various algorithms on four datasets.
We also test the degenerated variants of CoGNNs without collaborative graphs, i.e., CoGNNs (no ${\bf W}_t$).
We evaluate on the mean MAE, mean RMSE and mean MAPE as shown in Table~\ref{tab:result_on_traffic}.
We see that, 
1) compared to the previous works, our CoGNNs obtain the lowest errors on all the datasets;
2) compared to the model variants without collaborative graphs, the complete CoGNNs also achieve the much better prediction performances.
The detailed traffic forecasting at different prediction time are presented in Appendix.

\subsection{Model analysis}
\mypar{Regrets of various forms of CoGNNs}
\begin{figure}[tb]
    \begin{center}
        \begin{tabular}{cc}
            \includegraphics[width=0.45\columnwidth]{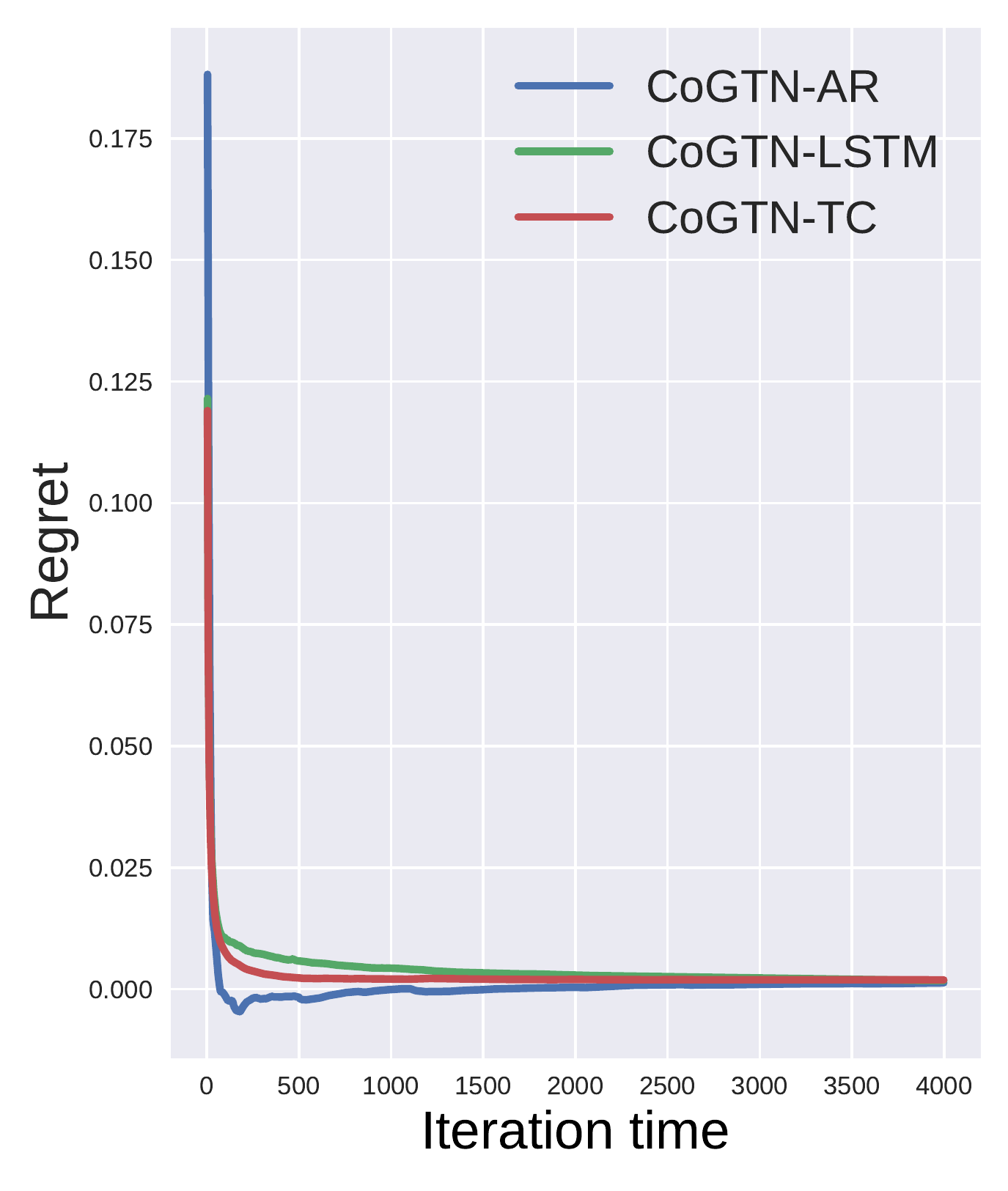} & 
            \includegraphics[width=0.45\columnwidth]{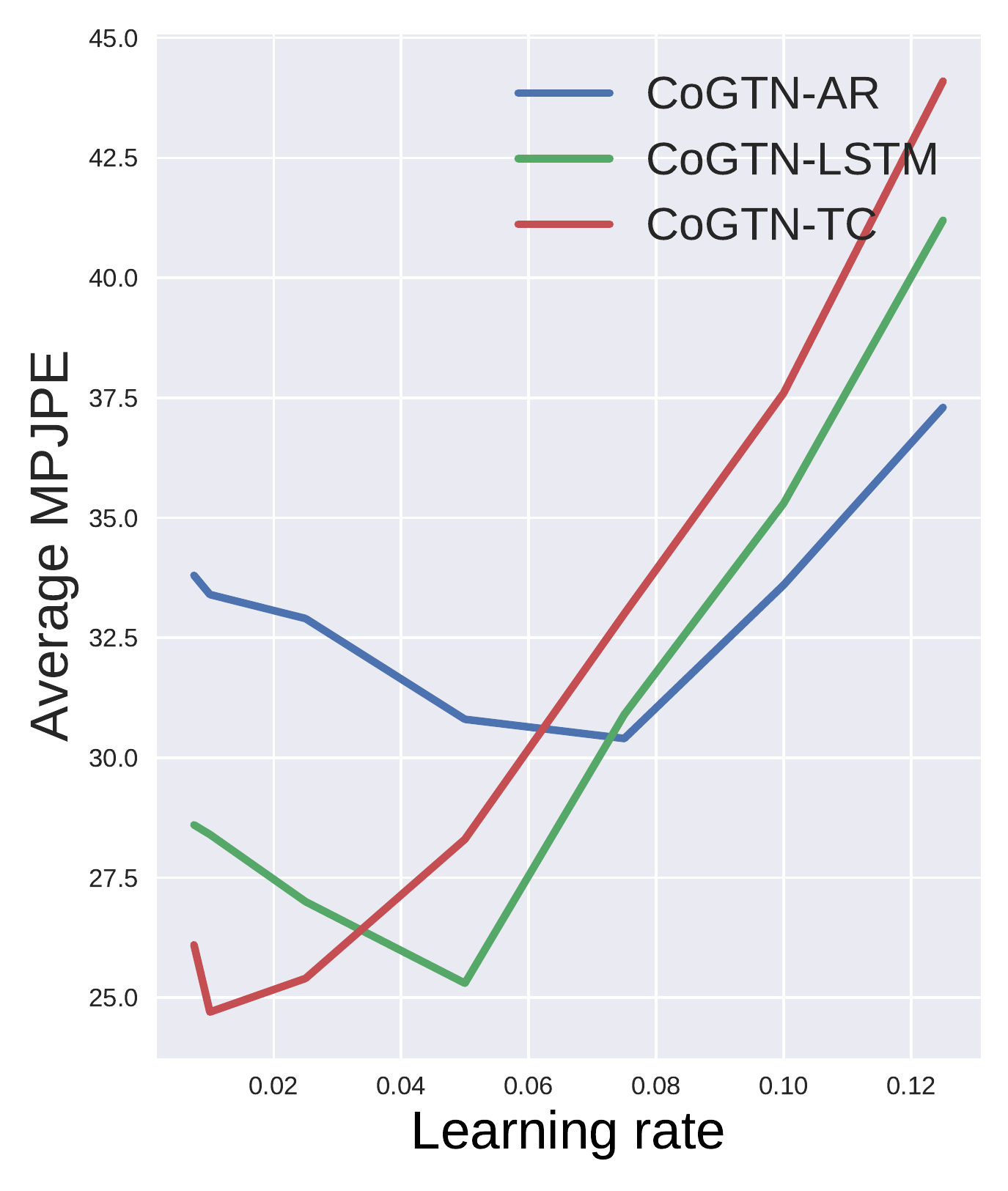}
            \\
           {{\small (a) Regret curves.}} & 
           {{\small (b) Effects of learning rates.}}
    \end{tabular}
    \caption{\small The regret curves and the performance with various learning rates of CoGNNs for online human motion prediction.}
    \label{fig:RegretPlot}
    \end{center}
\end{figure}
According to our theoretical regret analysis, the static regrets converge with ${\rm R}_T = \mathcal{O}(1/\sqrt{T})$.
Here we compute and visualize the static regrets of different forms of CoGNNs to verify our analysis.
For online motion prediction on H3.6M, we illustrate the regret curves along time within $4500$ iterations; see Fig.~\ref{fig:RegretPlot} (a).
The regrets of three forms of CoGNNs could converge to very low values within approximately $2500$ iterations. Additionally, the regret curves reflect that CoGNNs quickly close the performance gaps with the models that only consider the best collaborative pairs, showing the effectiveness and interpretation of our CoPUs.

\mypar{Effect of learning rate $\eta$}
The learning rate affects the training of the collaborative predictor and the collaborative graph in each CoPU. Here we vary the learning rate $\eta$ from $0.0075$ to $0.125$ to test the model performance for online motion prediction on H3.6M. The mean MPJPEs of different forms of CoGNNs are illustrated in Fig.~\ref{fig:RegretPlot} (b). We see that, CoGNN-AR, CoGNN-LSTM and CoGNN-TC achieve their best performance when $\eta$ is $0.075$, $0.05$ and $0.01$, respectively. CoGNN-AR tends to be more stable with various $\eta$ than others; CoGNN-LSTM and CoGNN-TC obtain better performance given the reasonable $\eta$.

\mypar{Effect of number of CoPUs in CoGNNs}
\begin{table}[t]
    \centering
    \scriptsize
    \caption{\small The avergae MPJPE of CoGNNs with different numbers of CoPUs for online human motion prediction.}
    \resizebox{1.0\columnwidth}{!}{
    \begin{tabular}{c|cccc}
        \hline
        \hline
        Methods & 1 CoPU & 2 CoPUs & 3 CoPUs & 4 CoPUs \\
        \hline
        CoGNN-AR & 38.16 & {\bf 34.42} & 34.46 & 35.15 \\
        CoGNN-LSTM & 28.48 & {\bf 26.73} & 27.47 & 26.92 \\
        CoGNN-TC & 26.07 & {\bf 24.39} & 24.74 & 25.03 \\
        \hline
        \hline
    \end{tabular}}
    \label{tab:layer}
\end{table}
Here we analyze how different numbers of CoPUs in the CoGNN architectures affect the performance. We test different forms of CoGNNs with $1$ to $4$ CoPUs for online human motion prediction on H3.6M. The mean MPJPEs are presented in Table~\ref{tab:layer}. For three forms of CoGNNs, two CoPUs help to achieve the lowest MPJPEs, which outperform the models with only one CoPU to different extents, showing the effectiveness of the progressively refinement. For $3$ or $4$ CoPUs, the model performance shows sub-par results due to the slight over-smoothing, but it still converges to stable values.

\mypar{Visualization of the learned collaborative graphs}
To show that our model effectively captures the implicit systematic dependencies by learning collaborative graphs, here we visualize the learned collaborative graphs on the various tasks of online simulated trajectory prediction, human motion prediction and traffic speed prediction, respectively.

We first visualize the collaborative graphs for online simulated trajectory prediction. We compare the learned collaborative graphs and the predefined interactions that are used to simulate the particle interaction systems. We illustrate the adjacency matrices of both simulated graphs and learned graphs. To reflect that our model could adapt to the changing states of the interaction systems, we show the learned graphs at different iteration steps during the online learning process. The adjacency matrices are illustrated in Fig.~\ref{fig:GraphPlot_traj}. We see that, at one iteration step, $t_1$ or $t_2$, the learned collaborative graph captures similar topologies to the simulated graphs with weighted edges to different extents. At different iteration steps, the collaborative graphs could adapt to the changing simulated interaction structures. Note that the element values of the graph adjacency matrices are not similar due to the normalization in our weight update algorithm, while we focus more on the visual similarity of topologies.
In this way, we verify that the collaborative graph could reasonably depict the implicit interactions in the complex systems.

\begin{figure}[tb]
    \centering
    \includegraphics[width=0.96\columnwidth]{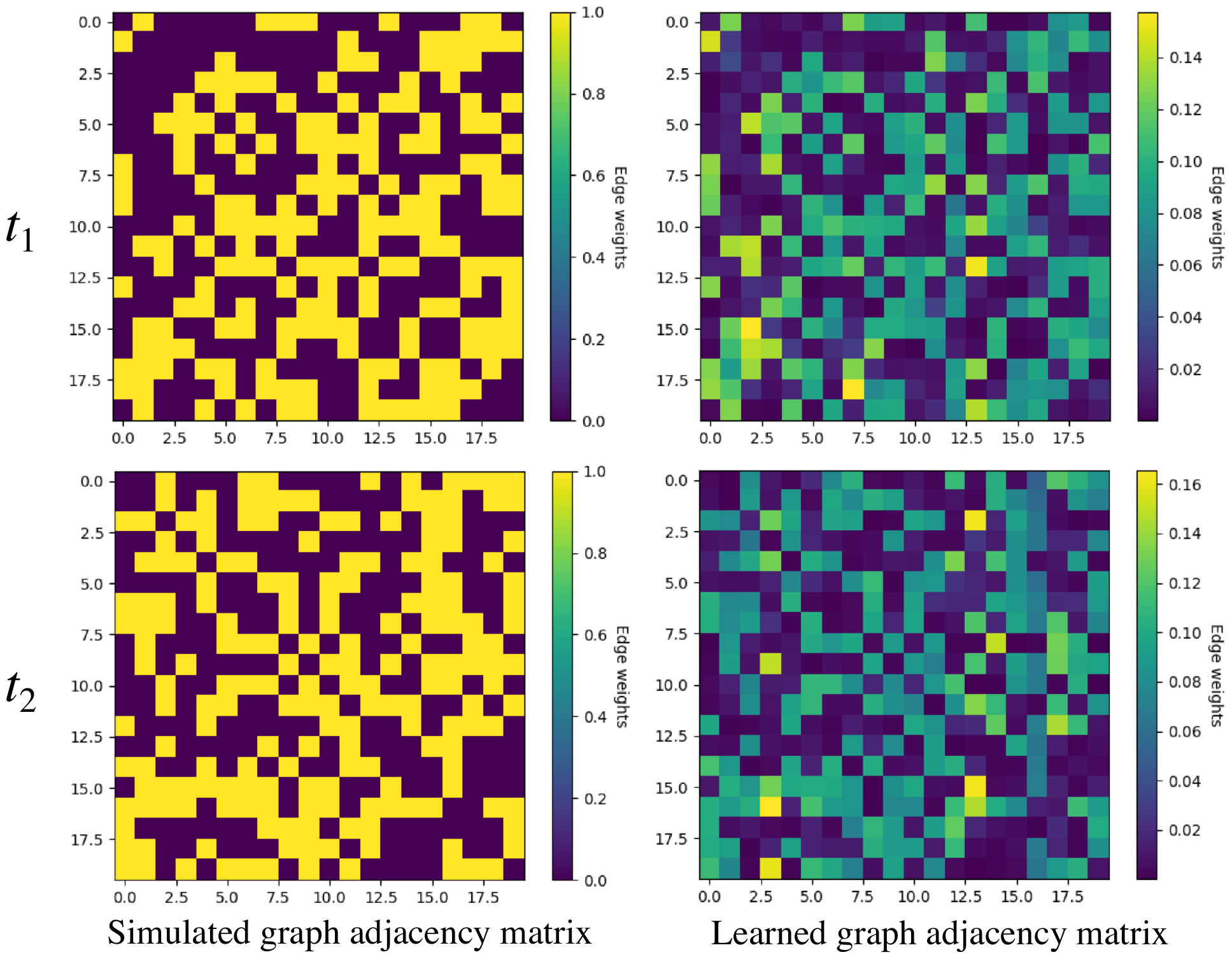}
    \caption{\small Visualization of the adjacency matrices of the collaborative graph on the task of online simulated trajectory prediction. The two rows denotes the different interactions at different time, while the first and the second columns denotes the adjacency matrices of the simulated graphs and the learned graphs, respectively.}
    \label{fig:GraphPlot_traj}
\end{figure}

\begin{figure}[tb]
    \centering
    \includegraphics[width=0.86\columnwidth]{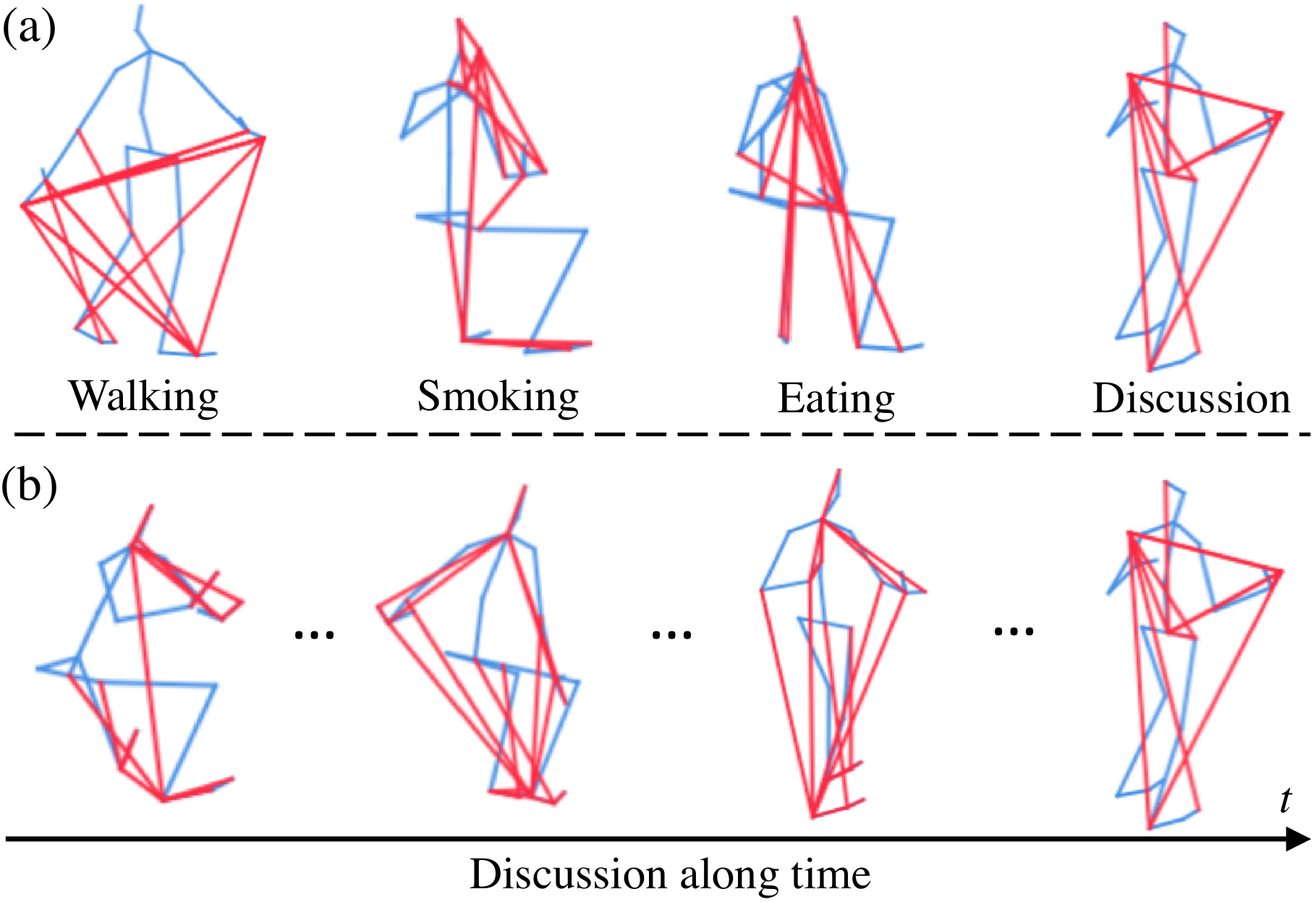}
    \caption{\small Visualization of the collaborative graphs on the task of online human motion prediction. The edges whose weights are larger than $0.2$ are plotted in red. (a) The learned graphs for different actions, showing distinct interactions. (b) The graphs evolves along time, showing dynamic interactions.}
    \label{fig:GraphPlot}
\end{figure}
Here we visualize the learned collaborative graphs on H3.6M dataset. We use red segments to plot the graph edges whose weights are larger than $0.2$ on human poses. Since any edge weights range from $0$ to $1$, and most of them are lower than $0.2$, thus the plotted edges show the important collaborative pairs, i.e., the important influence between two agents. The graphs are illustrated in Fig.~\ref{fig:GraphPlot}.
In plot (a), different actions have different graphs, indicating distinct relations on specific actions; for example, arms and legs affect across left and right for walking, while hands are highly related to other joints for eating and smoking.
In plot (b), we show the dynamic graphs changing along time during discussion.

\begin{figure}[tbh]
    \centering
    \includegraphics[width=0.98\columnwidth]{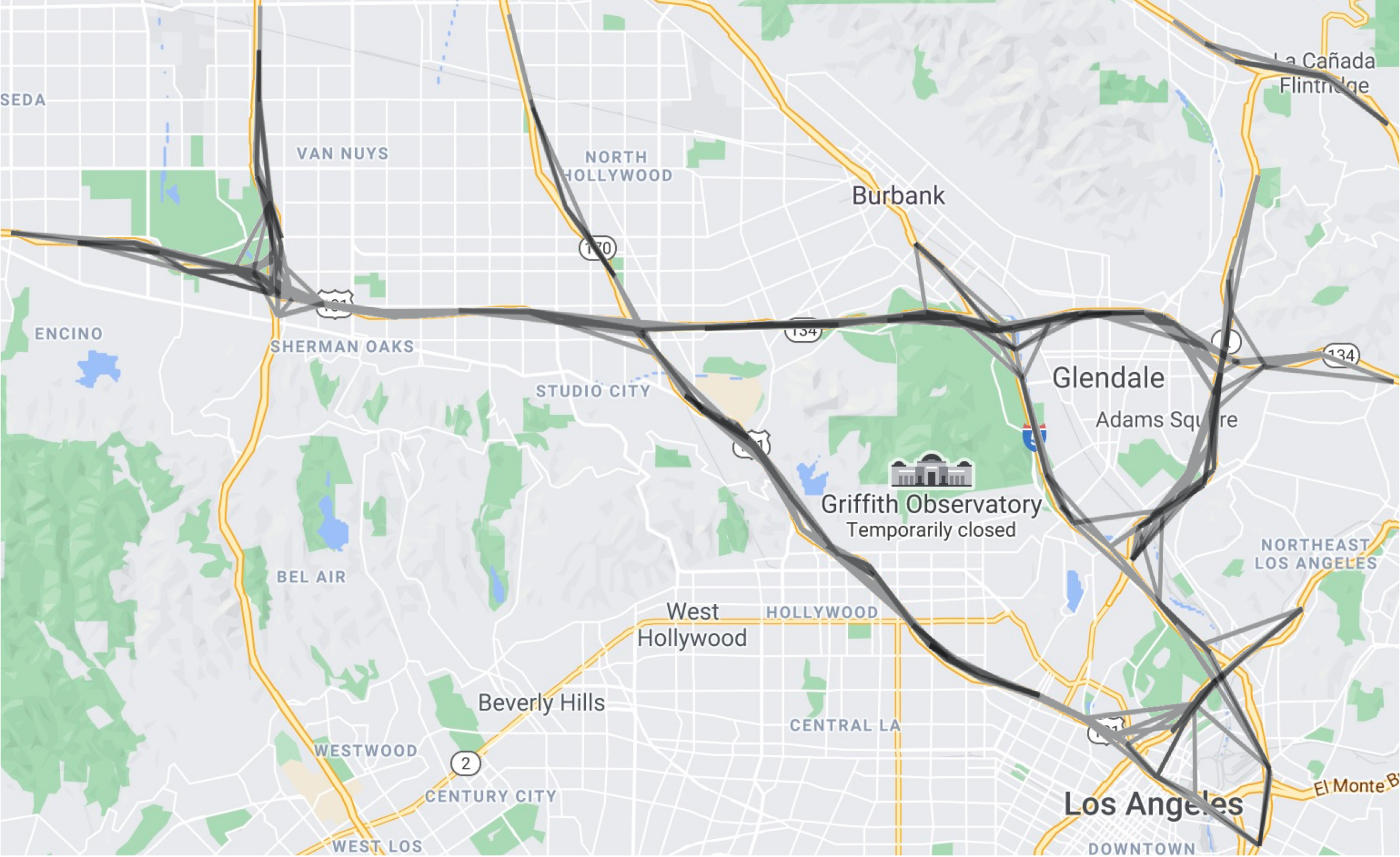}
    \caption{\small The learned collaborative graph for online traffic speed prediction on METR-LA dataset. The edges whose collaborative weights are larger than $0.1$ are illustrated, reflecting the most important collaborative effects among the $207$ nodes.}
    \label{fig:TrafficGraphPlot}
\end{figure}

We also show the learned collaborative graph for online traffic speed prediction. As an example, we show the learned graph on METR-LA dataset. We illustrate the edges whose collaborative weights are larger than $0.1$ in Fig.~\ref{fig:TrafficGraphPlot}, which reflects the most important collaborative effects among the $207$ nodes to some extent.
We see that, the traffic speed data collected by different sensors are mainly affected by their geographically nearby traffic. For example, a certain node on a single long high-way is mainly affected by other nodes on the same high-way, because a single long high-way could be regarded as an isolated systems, in which the the vehicles move and stop collaboratively. Moreover, near the intersections or the traffic circles, one node tends to be affected by the traffic around across roads, since many vehicles perform agglomeration effects in these local regions. Fig.~\ref{fig:TrafficGraphPlot} shows a reasonable structure to depict the collaborative effects.
The collaborative graphs for online human motion prediction have been illustrated in Figure 3 of our main text.
Different from the human motion data, in a traffic scenario, the interactions and relations between two nodes with a relatively long distance is usually weak. That is because traffic data has very strong spatial location attributes, and higher synergy tends to appear in more local areas, such as local congestion during peak hours; and two nodes farther apart are difficult to affect each other because of the decay of collaborative effects along distances.

\section{Conclusion}
\label{sec:conclusion}
We develop a novel method to predict the future statuses of a multi-agent system online. We propose a novel CoPU, which uses a collaborative graph to aggregate multiple collaborative predictors that learn dynamics from collaborative pairs. The collaborative graph depicts the influence level of collaborative pairs, which are adjusted with the guidance from an explicit objective. The regret analysis show that our method achieves the similar performance with the best single collaborative predictor. 
Multiple CoPUs are stacked as a CoGNN.
Experiments demonstrate the effectiveness of our method for various multi-agent forecasting tasks.








%

\small
\bibliography{IEEEabrv}
\bibliographystyle{IEEEtran}

%

\begin{IEEEbiography}
[{\includegraphics[width=1in,height=1.25in,clip,keepaspectratio]{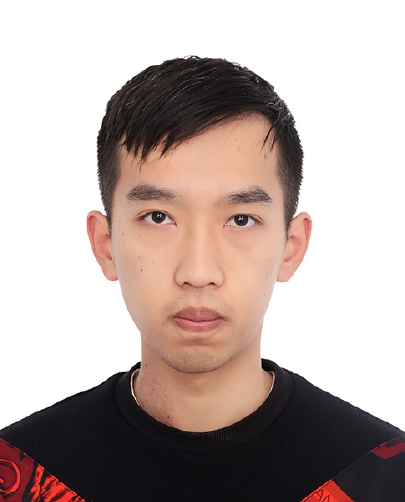}}]
{Maosen Li} recieved the B.E. degree in optical engineering from University of Electronic Science and Technology of China (UESTC), Chengdu, China, in 2017. He is working toward the Ph.D. degree at Cooperative Meidianet Innovation Center in Shanghai Jiao Tong University since 2017. His research interests include computer vision, machine learning, graph representation learning, and video analysis. He is the reviewer of some prestigious international journals and conferences, including IEEE-TPAMI, IEEE-TNNLS, IJCV, IEEE-TMM, PR, ICML, NeurIPS and AAAI. He is a student member of the IEEE.
\end{IEEEbiography}

\begin{IEEEbiography}
[{\includegraphics[width=1in,height=1.25in,clip,keepaspectratio]{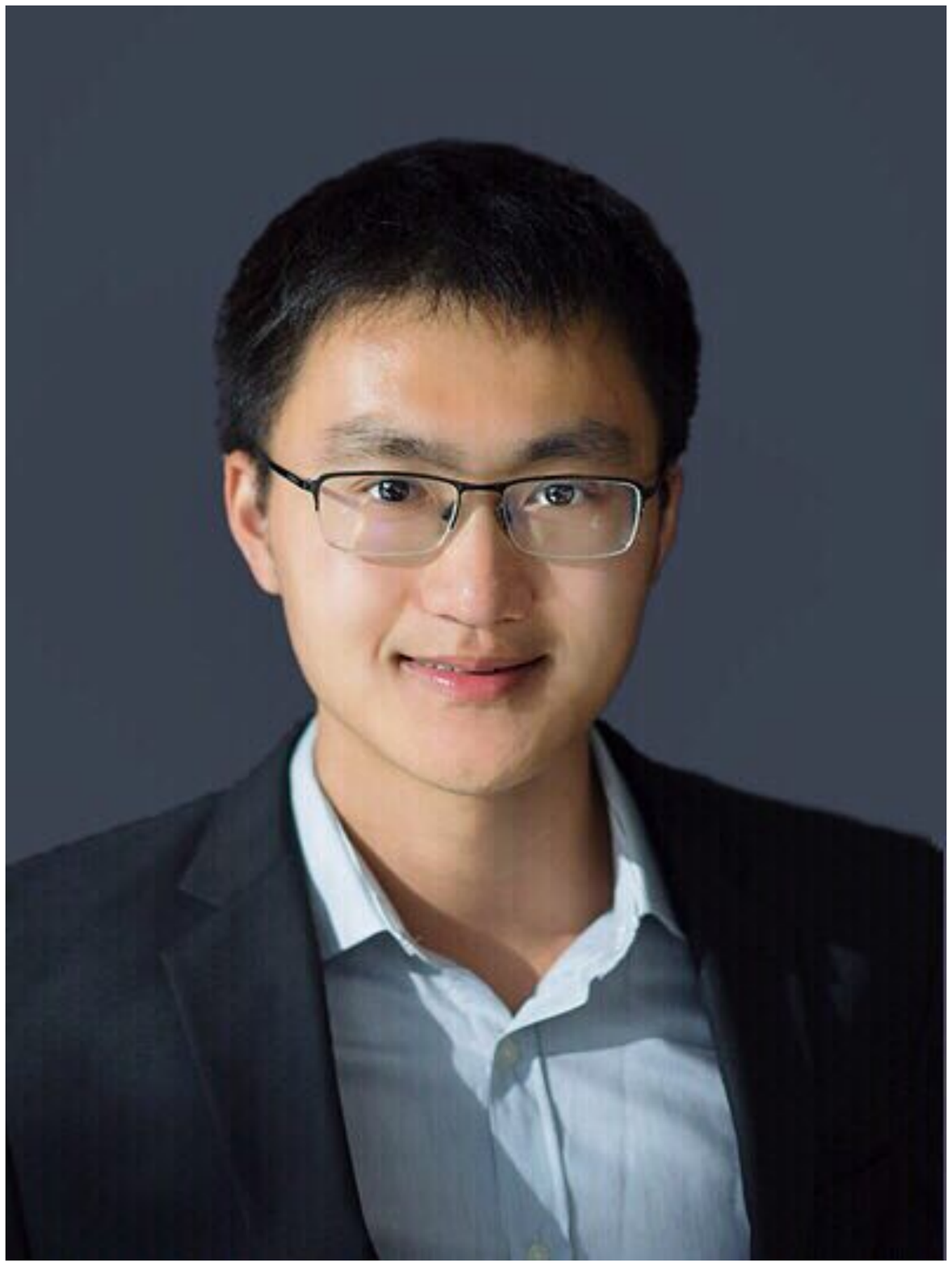}}]
{Siheng Chen} is an associate professor at Shanghai Jiao Tong University. Before that, he was a research scientist at Mitsubishi Electric Research Laboratories (MERL). Before joining MERL, he was an autonomy engineer at Uber Advanced Technologies Group, working on the perception and prediction systems of self-driving cars. Before joining Uber, he was a postdoctoral research associate at Carnegie Mellon University. Chen received the doctorate in Electrical and Computer Engineering from Carnegie Mellon University in 2016, where he also received two masters degrees in Electrical and Computer Engineering and Machine Learning, respectively. He received his bachelor's degree in Electronics Engineering in 2011 from Beijing Institute of Technology, China. Chen was the recipient of the 2018 IEEE Signal Processing Society Young Author Best Paper Award. His coauthored paper received the Best Student Paper Award at IEEE GlobalSIP 2018. He organized the special session "Bridging graph signal processing and graph neural networks" at ICASSP 2020. His research interests include graph signal processing, graph neural networks and autonomous driving. He is a member of IEEE.
\end{IEEEbiography}

\begin{IEEEbiography}
[{\includegraphics[width=1in,height=1.25in,clip,keepaspectratio]{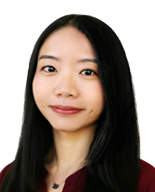}}]
{Yanning Shen} received her Ph.D. degree from the University of Minnesota (UMN) in 2019. She was a finalist for the Best Student Paper Award at the 2017 IEEE International Workshop on Computational Advances in Multi-Sensor Adaptive Processing, and the 2017 Asilomar Conference on Signals, Systems, and Computers. She was selected as a Rising Star in EECS by Stanford University in 2017, and received the UMN Doctoral Dissertation Fellowship in 2018. Her research interests span the areas of machine learning, network science, data science and statistical-signal processing.
\end{IEEEbiography}

\begin{IEEEbiography}
[{\includegraphics[width=1in,height=1.25in,clip,keepaspectratio]{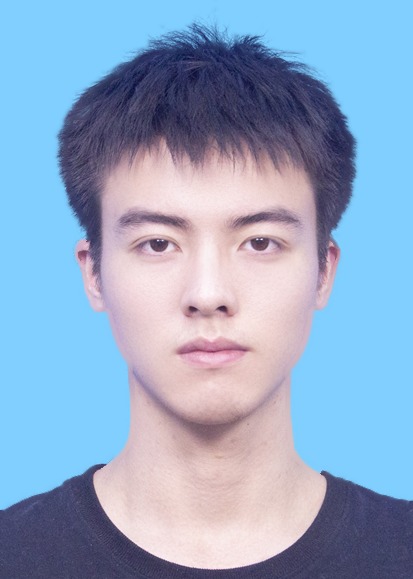}}]
{Genjia Liu} is an undergraduate student at Shanghai Jiao Tong University since 2017. He is going to work toward the Ph.D. degree at Cooperative Medianet Innovation Center in Shanghai Jiao Tong University from 2021. His research interests include graph signal processing and graph representation learning.
\end{IEEEbiography}

\begin{IEEEbiography}
[{\includegraphics[width=1in,height=1.25in,clip,keepaspectratio]{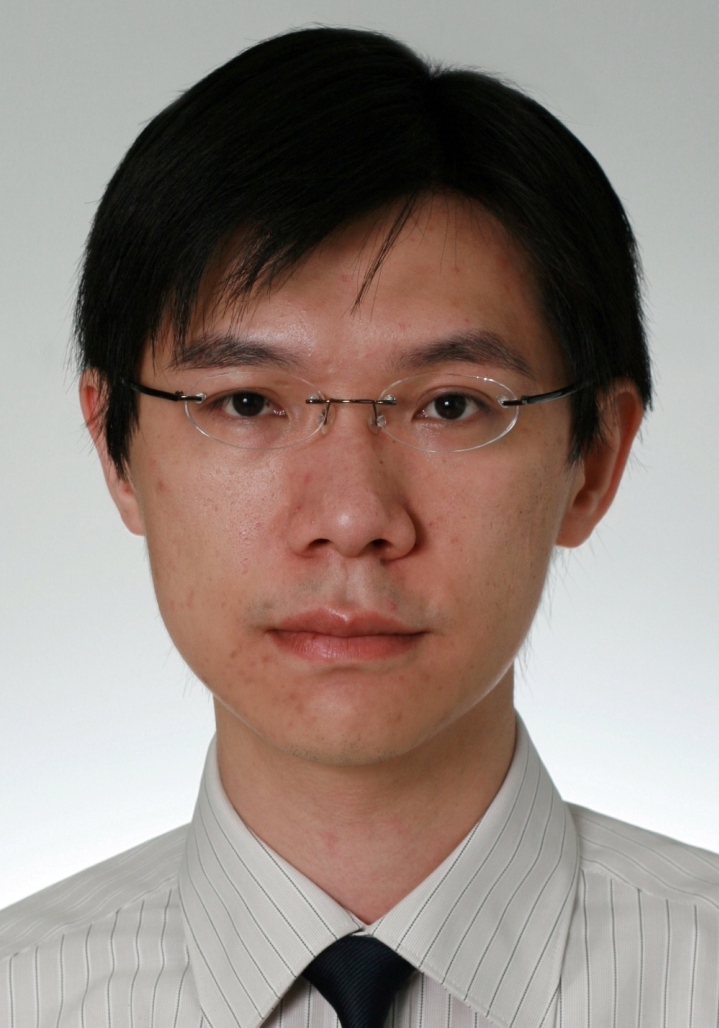}}]
{Ivor Tsang} is Professor of Artificial Intelligence with the University of Technology Sydney. He is also the Research Director of the Australian Artificial Intelligence Institute. His research interests include transfer learning, deep generative models, and big data analytics.  In 2013, Prof Tsang received his prestigious ARC Future Fellowship for his research regarding Machine Learning on Big Data. In 2019, his JMLR paper titled "Towards ultrahigh dimensional feature selection for big data" received the International Consortium of Chinese Mathematicians Best Paper Award. In 2020, Prof Tsang was recognized as the AI 2000 AAAI/IJCAI Most Influential Scholar in Australia for his outstanding contributions to the field of AAAI/IJCAI between 2009 and 2019. His research on transfer learning granted him the Best Student Paper Award at CVPR 2010 and the 2014 IEEE TMM Prize Paper Award. In addition, he received the IEEE TNN Outstanding 2004 Paper Award in 2007. He serves as a Senior Area Chair/Area Chair for NeurIPS, ICML, AISTATS, AAAI and IJCAI,  and the Editorial Board for JMLR, MLJ, and IEEE TPAMI.
\end{IEEEbiography}

\begin{IEEEbiography}
[{\includegraphics[width=1in,height=1.25in,clip,keepaspectratio]{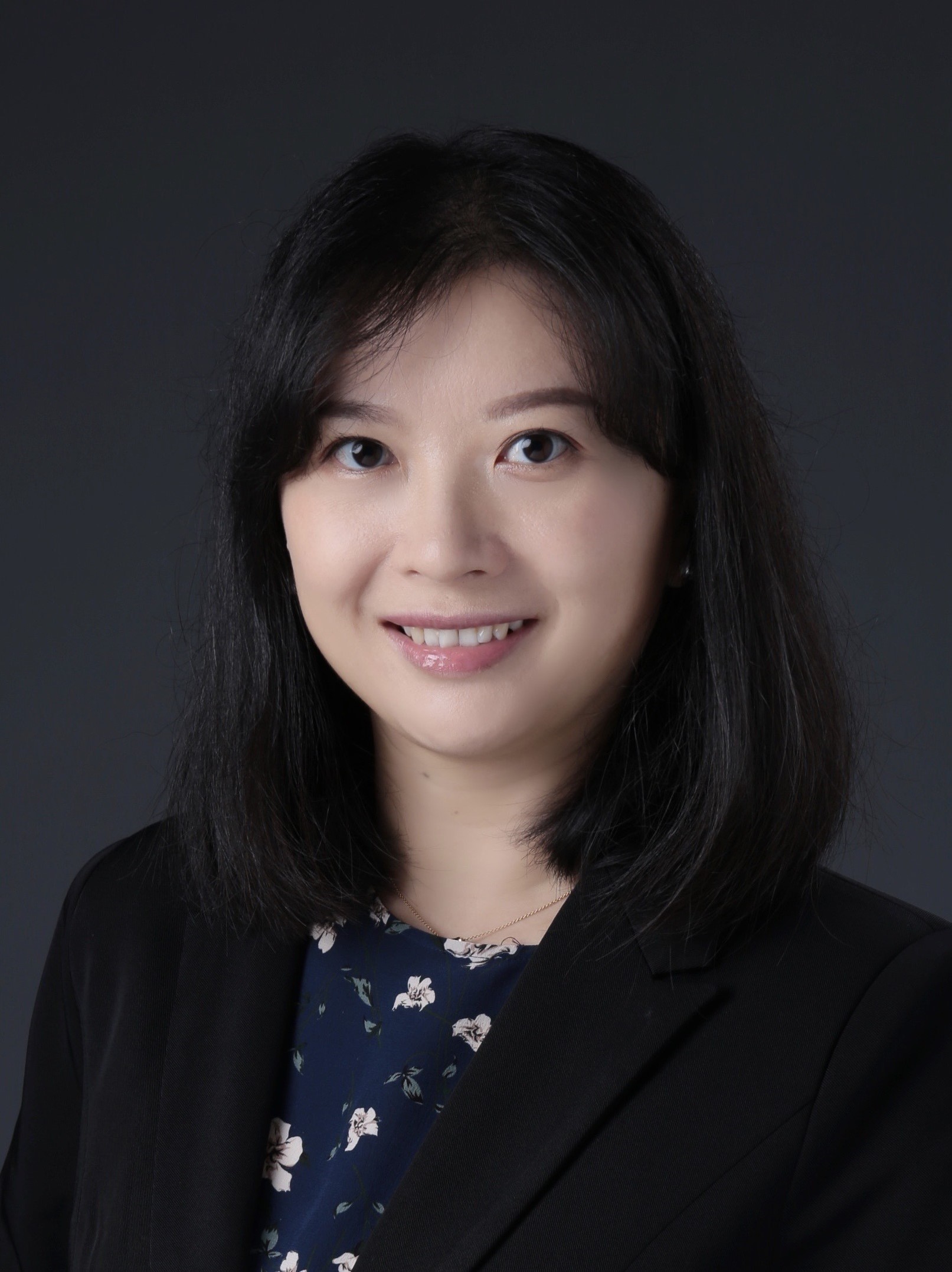}}]
{Ya Zhang}
is currently a professor at the Cooperative Medianet Innovation Center, Shanghai Jiao Tong University. Her research interest is mainly in machine learning with applications to multimedia and healthcare. Dr. Zhang holds a Ph.D. degree in Information Sciences and Technology from Pennsylvania State University and a bachelor's degree from Tsinghua University in China. Before joining Shanghai Jiao Tong University, Dr. Zhang was a research manager at Yahoo! Labs, where she led an R\&D team of researchers with strong backgrounds in data mining and machine learning to improve the web search quality of Yahoo international markets. Prior to joining Yahoo, Dr. Zhang was an assistant professor at the University of Kansas with a research focus on machine learning applications in bioinformatics and information retrieval. Dr. Zhang has published more than 70 refereed papers in prestigious international conferences and journals, including TPAMI, TIP, TNNLS, ICDM, CVPR, ICCV, ECCV, and ECML. She currently holds 5 US patents and 4 Chinese patents and has 9 pending patents in the areas of multimedia analysis. She was appointed the Chief Expert for the 'Research of Key Technologies and Demonstration for Digital Media Self-organizing' project under the 863 program by the Ministry of Science and Technology of China. She is a member of IEEE.
\end{IEEEbiography}

\end{document}